\newif\iftwocol\twocolfalse
\documentclass[letterpaper,final]{article}
\newif\ifblind\blindfalse

\PassOptionsToPackage{xcolor}{cmyk}

\usepackage{tex/pkgs/aaai21_full}
\usepackage[table]{xcolor}
\usepackage{ifdraft}

\ifoptionfinal{}{
\paperwidth=\dimexpr \paperwidth + 6cm\relax
\oddsidemargin=\dimexpr\oddsidemargin + 3cm\relax
\evensidemargin=\dimexpr\evensidemargin + 3cm\relax
\marginparwidth=\dimexpr \marginparwidth + 2cm\relax
}

\addtolength{\textwidth}{2mm}

\title{Agent Incentives: A Causal Perspective}

\author {
Tom Everitt,\textsuperscript{*\rm 1}
    Ryan Carey,\textsuperscript{*\rm 2}
    Eric D.\ Langlois,\textsuperscript{*\rm 1,3,4}
    Pedro A.\ Ortega,\textsuperscript{\rm 1}
    Shane Legg\textsuperscript{\rm 1}
    \\
}
\affiliations {
\textsuperscript{\rm 1}DeepMind,
    \textsuperscript{\rm 2}University of Oxford,
    \textsuperscript{\rm 3}University of Toronto,
    \textsuperscript{\rm 4}Vector Institute,
    \textsuperscript{*}Equal Contribution \\
    tomeveritt@google.com, ry.duff@gmail.com, edl@cs.toronto.edu, pedroortega@google.com
}

\usepackage{tex/pkgs/setup}
\usepackage{tex/pkgs/symbols}

\usepackage[decisionutilitycolor]{tex/pkgs/influence-diagrams}

\usepackage{amsmath}    \usepackage{amssymb}    \usepackage{amsthm}     \usepackage{thmtools}
\usepackage{thm-restate}
\usepackage{bm}         \usepackage{booktabs}   \usepackage{enumitem}   \usepackage[utf8]{inputenc}  \usepackage{mathtools}  \usepackage{microtype}  

\usepackage{natbib} \usepackage{tcolorbox}  \usepackage{xcolor}     \usepackage{subcaption}

\usepackage{cleveref}
\usepackage{nameref}
\newcommand{\posscite}[1]{\citeauthor{#1}'s (\citeyear{#1})}

\date{}

\begin{document}
\maketitle

\begin{abstract}

We present a framework for analysing agent incentives using causal influence
diagrams. We establish that a well-known criterion for value of information is
complete.
We propose a new graphical criterion for value of control, establishing its
soundness and completeness.
We also introduce two new concepts for incentive analysis: response incentives
indicate which changes in the environment affect an optimal decision, while
instrumental control incentives establish whether an agent can influence its utility
via a variable X.
For both new concepts, we provide sound and complete graphical criteria.
We show by
example how these results can help with evaluating the safety and fairness of an
AI system.

\end{abstract}

\section{Introduction}\label{sec:introduction}
A recurring question in AI research is how to choose an objective to induce safe
and fair
behaviour \citep{oneil2016weapons,russell2019human}.
In a given setup, will an optimal policy depend on a sensitive attribute, or seek to influence an important variable?
For example, consider the following two incentive design problems, to which we will return throughout the paper:~\looseness=-1

\begin{example}[Grade prediction]
To decide which applicants to admit, a university uses a model to predict the grades of new students.
The university would like the system to predict accurately, without treating students differently based on their gender or race
(see \cref{fig:race-preview}).~\looseness=-1 \end{example}

\begin{example}[Content recommendation]
An AI algorithm has the task of recommending a series of posts to a user.
The designers want the algorithm to present content adapted to each user's interests to optimize clicks.
However, they do not want the algorithm to use polarising content to manipulate the user into clicking more predictably
(\cref{fig:fci-preview}).~\looseness=-1
\end{example}

\paragraph{Contributions}
This paper provides a common language for incentive analysis, based on
influence diagrams \citep{howard1990influence} and causal models \citep{Pearl2009}.
Traditionally, influence diagrams have been used to help decision-makers make
better decisions.
Here, we invert the perspective, and use the diagrams to understand and predict
the behaviour of machine learning systems trained to optimize an objective in a
given environment.
To facilitate this analysis, we prove a number of relevant theorems and
introduce two new concepts:
\begin{itemize}
\item \emph{Value of Information}  (VoI):
  First defined by \citet{Howard1966}, a graphical criterion for detecting positive
  VoI in influence diagrams were proposed and proven sound by
  \citet{fagiuoli1998note}, \citet{Lauritzen2001}, and \citet{Shachter2016}.
  Here we offer the first correct completeness proof, showing that the graphical
  criterion is unique and cannot be further improved upon.
\item
  \emph{Value of Control} (VoC):
  Defined by \citet{shachter1986evaluating}, \citet{Matheson1990}, and \citet{Shachter2010}, an
  incomplete graphical criterion was discussed by \citet{shachter1986evaluating}.
  Here we provide a complete graphical criterion, along with both soundness and
  completeness proofs.
\item
  \emph{Instrumental Control incentive} (ICI):
  We propose a refinement of VoC to nodes the agent can influence
  with its decision.
  Conceptually, this is a hybrid of VoC and \emph{responsiveness} \citep{Shachter2016}.
We offer a formal definition of instrumental control incentives
  based on nested counterfactuals, and
  establish a sound and complete graphical criterion.
\item
  \emph{Response incentive} (RI):
  Which changes in the environment does an optimal policy respond to?
  This is a central problem in fairness
  and AI safety (e.g.\ \citealp{kusner2017counterfactual,Hadfield-Menell2016osg}).
  Again, we give a formal definition, and a
  sound and complete graphical criterion.
\end{itemize}

Our analysis focuses on influence diagrams with a single-decision.
This single-decision setting is adequate to model supervised learning, (contextual)
bandits, and the choice of a policy in an MDP. Previous work has also discussed ways
to transform a multi-decision setting into a single-decision setting by imputing policies
to later decisions \citep{Shachter2016}.

\begin{figure*}
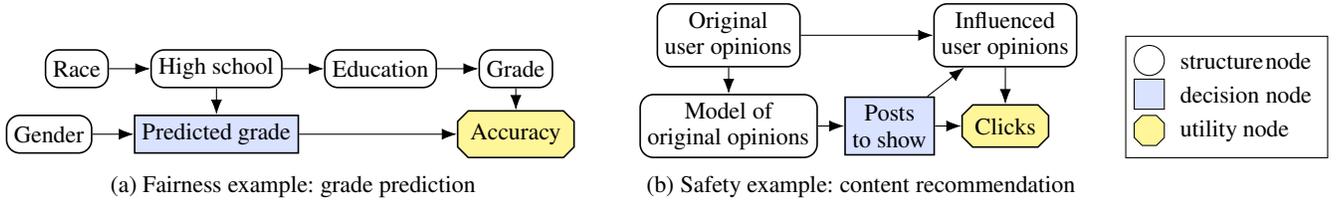

\begin{subfigure}[t]{0.45\textwidth}
    \centering
      \begin{influence-diagram}
    \setrectangularnodes
    \setcompactsize
    \tikzset{node distance=3.5mm and 5.5mm}

\node (R) [] {Race};
      \node (S) [right= of R] {High school};
      \node (E) [right= of S] {Education};
      \node (Gr) [right=of E] {Grade};
\node (D) [below=of S,decision] {Predicted grade};
      \node (Ge) [left= of D] {Gender};

      \node (U) at (Gr|-D) [utility] {Accuracy};

    \draw[->]
      (R) edge (S)
      (S) edge (E)
      (S) edge (D)
      (E) edge (Gr)
      (Gr) edge (U)
      (Ge) edge (D)
      (D) edge (U)
    ;
\end{influence-diagram}
 \caption{Fairness example: grade prediction}\label{fig:race-preview}
  \end{subfigure}\hfill
  \begin{subfigure}[t]{0.34\textwidth}
      \centering
      \begin{influence-diagram}
  \setrectangularnodes
  \setcompactsize
  \tikzset{node distance=4mm and 3.5mm}
  \node (D) [decision, anchor=west] {Posts\\ to show};
  \node (M) [left = of D] {Model of\\ original opinions};
  \node (P1) [above =3.5mm of M] {Original\\ user opinions};
  \node (U) [right = of D, utility] {Clicks};
  \node (P2) at (U|-P1) {Influenced\\ user opinions};
\draw[->]
    (P1) edge (M)
    (M) edge[information] (D)
    (P1) edge (P2)
    (D) edge (P2)
    (D) edge (U)
    (P2) edge (U)
  ;
\end{influence-diagram}
       \caption{Safety example: content recommendation}\label{fig:fci-preview}
  \end{subfigure}\hfill
  \begin{subfigure}[t]{0.16\textwidth}
    \begin{influence-diagram}
    \cidlegend[]{
      \legendrow{}{structure\! node} \\
      \legendrow{decision}{decision node}\\
      \legendrow{utilityc, chamfered rectangle xsep=1.5pt, chamfered rectangle ysep=1.5pt}{utility node}\\
}
\end{influence-diagram}
  \end{subfigure}
  \caption{Two examples of decision problems represented as causal influence diagrams.
      In a) a predictor at a hypothetical university aims to estimate a student's grade, using as inputs
      their gender and the high school they attended.
      We ask whether the predictor is incentivised to behave in a discriminatory manner
      with respect to the students' gender and race.
      In this hypothetical cohort of students, performance is assumed to be a function of the quality of the high-school education they received.
      A student's high-school is assumed to be impacted by their race, and can affect the quality of their education.
      Gender, however, is assumed not to have an effect.
      In b) the goal of a content recommendation system is to choose posts that will maximise the user's click rate.
      However, the system's designers prefer the system not to manipulate the user's opinions in order
      to obtain more clicks.~\looseness=-1
      }
\label{fig:cid-preview}
\end{figure*}

 \paragraph{Applicability}
This paper combines material from two preprints
\citep{everitt2019understanding,carey2020incentives}.
Since the release of these preprints,
the unified language of causal influence diagrams
have already aided in the understanding of incentive problems such as
an agent's redirectability, ambition, tendency to tamper with reward, and other properties
\citep{Armstrong2020pitfalls,Holtman2020,cohen2020unambitious,everitt2019tampering,Everitt2019modeling,langlois2021rl}.

 \section{Setup}\label{sec:setup}
To analyse agents' incentives, we will need a graphical framework with
the causal properties of a structural causal model and the node categories
of an influence diagram.
This section will define such a model
after reviewing structural causal models and influence diagrams.

\subsection{Structural Causal Models}

Structural causal models (SCMs) \citet{Pearl2009}
are a type of causal model where all randomness is consigned to \emph{exogenous}
variables, while deterministic \emph{structural} functions relate the
\emph{endogenous} variables to each other and to the exogenous ones.
As demonstrated by \citet{Pearl2009}, this \emph{structural} approach
has significant benefits over traditional causal Bayesian networks for
analysing (nested) counterfactuals and ``individual-level'' effects.

\begin{definition}[Structural causal model; {\citealp[Chapter 7]{Pearl2009}}]\label{def:scm}
    A \emph{structural causal model} (with independent errors) is a tuple
    $\langle \exovars, \evars, \structfns, P\rangle$, where $\exovars$ is a set of exogenous
    variables; $\evars$ is a set of endogenous variables;
    and $\structfns= \setfor{\fv{\evar}}{\evar \in \evars}$ is a collection of
    functions, one for each $\evar$.
    Each function $\fv{\evar}\colon \dom(\Pav{\evar} \cup \{\exovarv{\evar}\}) \to \dom({\evar})$
    specifies the value of $\evar$ in terms of
    the values of the corresponding exogenous variable $\exovarv{\evar}$
    and endogenous
    parents $\Pav{\evar} \subset \evars$, where
    these functional dependencies are acyclic.
    The domain of a variable $\evar$ is $\dom(\evar)$
    and for a set of variables, ${\dom(\sW) := \bigtimes_{W \in \sW}{\dom(W)}}$.
The uncertainty is encoded through a probability distribution $P(\exovals)$ such that the exogenous variables are mutually independent.~\looseness=-1
\end{definition}

For example, \cref{fig:counterfactual1} shows an SCM that models how
\emph{posts} ($D$) can influence a user's \emph{opinion} ($O$) and \emph{clicks}
($U$).x

The exogenous variables $\exovars$ of an SCM represent factors that are not
modelled.
For any value $\exovars = \exovals$ of the exogenous variables, the value of
any set of variables $\sW \subseteq \evars$ is given by recursive application of
the structural functions $\structfns$ and is denoted by $\sW(\exovals)$.
Together with the distribution $\exoprob(\exovals)$ over exogenous variables, this induces a
joint distribution ${\Pr(\sW = \sw)}
= \sum_{\{\exovals|\sW(\exovals)=\sw\}}{\exoprob(\exovals)}$.~\looseness=-1

SCMs model \emph{causal interventions} that set variables to particular values.
These are defined via submodels:

\begin{definition}[Submodel; {\citealp[Chapter 7]{Pearl2009}}]\label{def:submodel}
Let $\scm = \scmdef$ be an SCM, $\sX$ a set of variables in $\evars$, and
$\sx$ a particular realization of $\sX$.
The submodel $\scm_\sx$ represents the effects of an \emph{intervention}
$\Do(\sX=\sx)$,
and is formally defined as the SCM
${\langle \exovars, \evars, \structfns_\sx, \exoprob \rangle}$
where
    ${\structfns_\sx = \{\fv{\evar} | \evar \notin \sX\} \cup
    {\{\sX = \sx\}}}$.
That is to say, the original functional relationships of $X \in \sX$ are
replaced with the constant functions $X = x$.
\end{definition}

More generally, a \emph{soft intervention} on a variable $X$
in an SCM $\scm$ replaces $\fv{X}$ with a
function $\gx\colon \dom(\Pav{X} \cup \{\exovarv{X}\}) \to \dom(X)$
\citep{eberhardt2007interventions,tian2013causal}.
The probability distribution $\Pr(\sW_{\gx})$
on any $\sW \subseteq \evars$
is defined as the value of $\Pr(\sW)$ in the submodel $\scim_{\gx}$
where $\scim_{\gx}$ is $\scim$
modified by replacing $\fv{X}$ with $\gx$.

If $W$ is a variable in an SCM $\scim$, then $W_{\sx}$ refers to the same variable
in the submodel $\scim_{\sx}$ and is called a \emph{potential response variable}.
In \cref{fig:counterfactual1}, the random variable $O$ represents user opinion
under ``default'' circumstances
while $O_d$ in \cref{fig:counterfactual3} represents the
user's opinion given an intervention $\doo(D=d)$ on the content posted.
Note also how the intervention on $D$ severs the link from $\exovalv{D}$ to $d$
in \cref{fig:counterfactual3}, as the intervention on $D$ overrides the causal
effect from $D$'s parents.
Throughout this paper we use subscripts to indicate submodels or interventions,
and superscripts for indexing.

\begin{figure*}
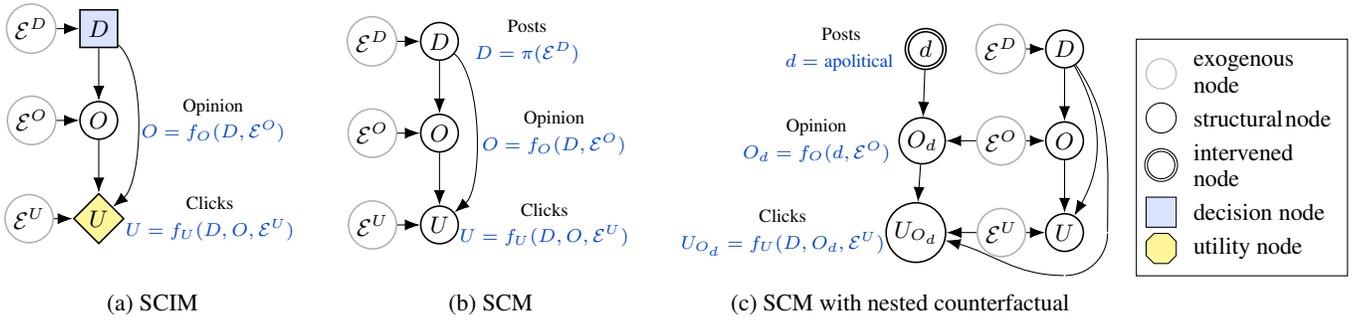

  \centering
\newlength{\mysubfigwidth}
  \pgfmathsetlength{\mysubfigwidth}{40mm}
  \begin{subfigure}[t]{\mysubfigwidth}
    \centering
    \begin{influence-diagram}
      \setcompactsize
      \setinnersep{0.5mm}
      \node (D) [decision] {$D$};
      \node (X) [below =7mm of D] {$O$};
      \node (Y) [below =7mm of X,utility] {$U$};
        \node (ed) [left = 3mm of D, exogenous] {$\exovarv{D}$};
        \node (ex) [left = 3mm of X, exogenous] {$\exovarv{O}$};
        \node (ey) [left = 2.5mm of Y, exogenous] {$\exovarv{U}$};
      \path
      (ed) edge[->] (D)
      (ex) edge[->] (X)
      (ey) edge[->] (Y)
      (D) edge[->] (X)
      (X) edge[->] (Y)
      ;
      \draw[->] (D) to[in=40,out=-40,looseness=.6] (Y);
      \begin{scope}[
        every node/.style={draw=none,rectangle,align=center,inner sep=0mm}
        ]
        \scriptsize
\node[right = 3mm of X] {Opinion \\ \textcolor{dmblue500}{$O = f_O(D, \exovarv{O})$}};
        \node[right = 0mm of Y] {Clicks\\ \textcolor{dmblue500}{$U = f_U(D, O,\exovarv{U})$}};
      \end{scope}
      \node (h2) [below = 4mm of Y, inner sep=0mm, minimum size=0mm] {};
    \end{influence-diagram}
    \caption{SCIM}\label{fig:scim-example}
\end{subfigure}
  \hfill
  \pgfmathsetlength{\mysubfigwidth}{40mm}
  \begin{subfigure}[t]{\mysubfigwidth}
    \centering
    \begin{influence-diagram}
      \setcompactsize
      \setinnersep{0.5mm}
      \node (D) [] {$D$};
      \node (X) [below =7mm of D] {$O$};
      \node (Y) [below =7mm of X] {$U$};
        \node (ed) [left = 3mm of D, exogenous] {$\exovarv{D}$};
        \node (ex) [left = 3mm of X, exogenous] {$\exovarv{O}$};
        \node (ey) [left = 3mm of Y, exogenous] {$\exovarv{U}$};
      \path
      (ed) edge[->] (D)
      (ex) edge[->] (X)
      (ey) edge[->] (Y)
      (D) edge[->] (X)
      (X) edge[->] (Y)
      ;
      \draw[->] (D) to[in=40,out=-40,looseness=.6] (Y);
      \begin{scope}[
        every node/.style={draw=none,rectangle,align=center,inner sep=0mm}
        ]
        \scriptsize
        \node[right = 2mm of D] {Posts \\ \textcolor{dmblue500}{$D = \pi(\exovarv{D})$}};
        \node[right = 3mm of X] {Opinion \\ \textcolor{dmblue500}{$O = f_O(D, \exovarv{O})$}};
        \node[right = 0mm of Y] {Clicks\\ \textcolor{dmblue500}{$U = f_U(D, O,\exovarv{U})$}};
      \end{scope}
      \node (h2) [below = 4mm of Y, inner sep=0mm, minimum size=0mm] {};
    \end{influence-diagram}
    \caption{SCM}\label{fig:counterfactual1}
\end{subfigure}
  \hfill
  \pgfmathsetlength{\mysubfigwidth}{59mm}
  \begin{subfigure}[t]{\mysubfigwidth}
    \centering
    \begin{influence-diagram}
      \setcompactsize
      \setinnersep{0.5mm}
      \node (D) [] {$D$};
      \node (X) [below =7mm of D] {$O$};
      \node (Y) [below =7mm of X] {$U$};
      \node (ed) [left =2.5mm of D, exogenous] {$\exovarv{D}$};
      \node (ex) [left =2.5mm of X, exogenous] {$\exovarv{O}$};
      \node (ey) [left =2.5mm of Y, exogenous] {$\exovarv{U}$};
      \node (d) [double,left = of ed] {$d$};
      \node (Xd) [left = of ex] {$O_d$};
      \node (YXd) [left = of ey] {$U_{O_d}$};
      \path
      (ed) edge[->] (D)
      (ex) edge[->] (X)
      (ex) edge[->] (Xd)
      (ey) edge[->] (Y)
      (ey) edge[->] (YXd)
      (D) edge[->] (X)
      (d) edge[->] (Xd)
      (X) edge[->] (Y)
      (Xd) edge[->] (YXd)
      (D) edge[->,bend left] (Y)
      ;
      \node (h1) [right = 3mm of Y, inner sep=0mm, minimum size=0mm] {};
      \node (h2) [below = 3mm of Y, inner sep=0mm, minimum size=0mm] {};
      \path (D) edge[out=-55,in=90] (h1)
      (h1) edge[out =-90, in=0] (h2)
      (h2) edge[->, out =180, in=-15] (YXd)
      ;
      \begin{scope}[node distance=0.1cm and 0.1cm,
        every node/.style={ draw=none, rectangle,align=center,
          inner sep=0mm }
        ]
        \scriptsize
        \node [left = of d] {Posts \\ \textcolor{dmblue500}{$d=\text{apolitical}$ }};
        \node [left = of Xd] { Opinion\\
          \textcolor{dmblue500}{$O_d = f_O(d, \exovarv{O})$}};
          \node [left =0mm of YXd] {Clicks\\
          \textcolor{dmblue500}{$U_{O_d} = f_U(D,O_d,\exovarv{U})$}};
        \end{scope}

    \end{influence-diagram}
\caption{SCM with nested counterfactual}\label{fig:counterfactual3}
\end{subfigure}
\begin{influence-diagram}
    \cidlegend[]{
      \legendrow{exogenous}{exogenous\\ node} \\
      \legendrow{}{structural\! node} \\
      \legendrow{double}{intervened\\ node} \\
      \legendrow{decision}{decision node}\\
      \legendrow{utilityc, chamfered rectangle xsep=1.5pt, chamfered rectangle ysep=1.5pt}{utility node}\\
}
\end{influence-diagram}
  \caption{
    An example of a SCIM and interventions.
    In the SCIM, either political or apolitical posts $D$ are displayed.
    These affect the user's opinion $O$. $D$ and $O$ influence the user's clicks $U$ (a).
    Given a policy, the SCIM becomes a SCM (b).
    Interventions and counterfactuals may be defined in terms of this SCM.\@
    For example, the nested counterfactual $U_{O_d}$ represents
    the number of clicks if the user has the opinions that they would
    arrive at, after viewing apolitical content (c).}
  \label{fig:counterfactual}

\end{figure*}

More elaborate hypotheticals can be described with
a nested counterfactual, in which the intervention
is itself a potential response variable.
In \cref{fig:counterfactual3}, the \emph{click} probability $U$ depends on both
the chosen \emph{posts} $D$ and the user \emph{opinion} $O$, which is in turn
also influenced by $D$.
The nested potential response variable $U_{O_d}$, defined by
$U_{O_d}(\exovals) \coloneqq U_{o}(\exovals)$ where $o=O_d(\exovals)$,
represents the probability
that a user clicks on a ``default'' post $D$ given that their opinion has been
influenced by a hypothetical post $d$.
In other words, the effect of the intervention $\doo(D=d)$ is propagated to $U$
only through $O$.

\subsection{Causal Influence Diagrams}

Influence diagrams are graphical models with special decision and utility nodes,
developed to model decision making problems
\citep{howard1990influence,Lauritzen2001}.
Influence diagrams do not in general have causal semantics, although some causal
structure can be inferred \citep{Heckerman1995}.
We will assume that the edges of the influence diagram reflect the causal
structure of the environment, so we use the term “Causal Influence Diagram”.

\begin{definition}[Causal influence diagram]\label{def:cid}
    A \emph{causal influence diagram} (CID) is a directed acyclic graph
$\cid$ where the vertex set $\evars$ is partitioned into \emph{structure
nodes} $\structvars$, \emph{decision nodes} $\decisionvars$, and \emph{utility
nodes} $\utilvars$.
Utility nodes have no children.~\looseness=-1
\end{definition}

We use $\Pav{\evar}$ and $\Descv{\evar}$ to denote the
parents and descendants of a node $\evar \in \evars$.
The parents of the decision, $\Pad$, are also called observations.
An edge from node $V$ to node $Y$ is denoted $V \to Y$.
Edges into decisions are called information links, as they indicate what
information is available at the time of the decision.
A directed path (of length at least zero) is denoted $V \pathto Y$.
For sets of variables, $\sV \pathto \sY$ means that $V \pathto Y$ holds for some
$V \in \sV$, $Y \in \sY$.~\looseness=-1

\subsection{Structural Causal Influence Models}
For our new incentive concepts, we define a hybrid of the influence diagram and the SCM.\@
Such a model, originally proposed by \citet{dawid2002influence}, has
structure and utility nodes with associated functions,
exogenous variables with an associated probability distributions,
and decision nodes, without any function at all, until one is selected by an agent.\footnote{Dawid called this a ``functional influence diagram''.
  We favour the term SCIM, because the corresponding term SCM
  is more prevalent than ``functional model''.
}
This can be formalised as the \emph{structural causal influence model} (SCIM, pronounced `skim').

\begin{definition}[Structural causal influence model]\label{def:scim}

    A \emph{structural causal influence model} (SCIM)
is a tuple $\scim = \scimdef$ where:
\begin{itemize}
  \item $\cid$ is a CID with finite-domain variables $\evars$
    (partitioned into $\structvars$, $\decisionvars$, and $\utilvars$)
where utility variable domains are a subset of $\Reals$.
    We say that $\scim$ is \emph{compatible with} $\cid$.
  \item $\exovars = \setfor{\exovarv{\evar}}{\evar \in \evars}$
    is a set of finite-domain \emph{exogenous variables}, one for each
    endogenous variable.
  \item
    $\structfns \!=\! \setfor{\fv{\evar}}{\evar \in \evars \setminus \decisionvars}$
    is a set of \emph{structural functions}
$\fv{\evar}\colon {\dom(\Pav{\evar} \cup \{\exovarv{\evar}\}) \to \dom({\evar})}$
    that specify how each
    non-decision endogenous variable depends on its parents in
    $\cid$ and its associated exogenous variable.
  \item $\exoprob$ is a probability distribution for $\exovars$
    such that the individual exogenous variables $\exovarv{\evar}$ are mutually
    independent.
\end{itemize}
\end{definition}

We will restrict our attention to single-decision settings with
$\decisionvars=\{D\}$.
An example of such a SCIM for the content recommendation example
is shown in \cref{fig:scim-example}.
In single-decision SCIMs,
the decision-making task is to maximize expected utility by selecting
a decision $d\in \dom(D)$ based on the observations $\Pad$.
More formally, the task is to select a structural function for $D$ in the form
of a \emph{policy} $\pi: \sfsig{\decisionvar}$.
The exogenous variable $\exovarv{D}$ provides randomness to allow
the policy to be a stochastic function of its endogenous parents $\Pad$.
The specification of a policy turns a SCIM $\scim$ into an SCM
$\scim_\pi := \langle \exovars, \evars, \structfns \cup \{\pi\}, \exoprob
\rangle$, see \cref{fig:counterfactual1}.
With the resulting SCM, the standard definitions of causal interventions apply.
Note that what determines whether a node is observed or not at the time
of decision-making is whether the node is a parent of the decision.
Commonly,
some structure nodes represent latent variables that are unobserved.

We use $\Prs{\pi}$ and $\EEs{\pi}$ to denote probabilities and expectations
with respect to $\scims{\pi}$.
For a set of variables $\sX$ not in $\Descv{\decisionvar}$,
$\Prs{\pi}(\sx)$ is independent of $\pi$ and we simply write $\Pr(\sx)$.
An \emph{optimal policy} for a SCIM is defined as any policy $\pi$
that maximises $\EEs{\pi}[\totutilvar]$, where
$\totutilvar \coloneqq \sum_{\utilvar \in \utilvars}{\utilvar}$.
A potential response $\totutilvar_\sx$ is defined as
$\totutilvar_\sx \coloneqq \sum_{\utilvar \in \utilvars}{\utilvar_\sx}$.~\looseness=-1

\section{Materiality}

Next, we review a characterization
of which observations are \emph{material} for optimal performance,
as this will be a fundamental building block for most of our theory.\footnote{In contrast to subsequent sections, the results in this section and
  the VoI section do not require the influence diagrams to be causal.}~\looseness=-1

\begin{definition}[Materiality; \citealp{Shachter2016}]
  For any given SCIM $\scim$, let
  $\attutil(\scim)=\max_{\pi}\EEs{\pi}[\totutilvar]$ be the maximum attainable
  utility in $\scim$, and let $\scim_{X \not \to D}$ be $\scim$
  modified by removing any information link $X \to D$.  
  The observation $X\in \Pad$ is \emph{material} if
  $\attutil(\scim_{X \not \to D})
  <
  \attutil(\scim)
  $.
\end{definition}

Nodes may often be identified as immaterial
based on the graphical structure alone \citep{fagiuoli1998note,Lauritzen2001,Shachter2016}.
The graphical criterion uses uses the notion of d-separation.~\looseness=-1

\begin{definition}[d-separation; \citealp{Verma1988soundness}]\label{def:d-separation}
    A path $p$ is said to be d-separated by a set of nodes $\sZ$ if and only if:
    \begin{enumerate}
    \item $p$ contains a collider $X \to W \gets Y$, such that the middle node $W$ is not in $\sZ$ and no descendants of $W$ are in $\sZ$, or
    \item $p$ contains a chain $X \to W \to Y$ or fork $X \gets W \to Y$ where $W$ is in $\sZ$, or
    \item one or both of the endpoints of $p$ is in $\sZ$.
    \end{enumerate}
    A set $\sZ$ is said to d-separate $\sX$ from $\sY$,
written ${(\sX \dsepg \sY \mid \sZ)}$ if and only if $\sZ$ d-separates every path
from a node in $\sX$ to a node in $\sY$. Sets that are not d-separated are
called d-connected.~\looseness=-1
\end{definition}
 
According to the graphical criterion of \citet{fagiuoli1998note}, an observation
cannot provide useful information if it is d-separated from utility, conditional on
other observations.
This condition is called nonrequisiteness.

\begin{definition}[Nonrequisite observation; \citealp{Lauritzen2001}]
  \label{def:requisite-observation}
  Let $\utilvarsd := \utilvars\cap\Descv{\decisionvar}$ be
  the utility nodes downstream of $\decisionvar$.
  An observation $X\in \Pad$ in a single-decision CID $\causalgraph$ is
  \emph{nonrequisite} if:  \begin{equation}
    \label{eq:voi-criterion}
    X \perp \utilvars^D \bmid \left(\Pad \cup \{\decisionvar\} \setminus \{X\}\right)
  \end{equation}
  In this case, the edge $X\to \decisionvar$ is also called nonrequisite.
  Otherwise $X$ and $X\to \decisionvar$ are \emph{requisite}.
\end{definition}
 
For example, in \cref{fig:race-a}, \emph{high school} is a requisite observation while
\emph{gender} is not.

\section{Value of Information}
Materiality can be generalized to nodes not observed,
to assess which variables a decision-maker would benefit from
knowing before making a decision, i.e.\ which variables have VoI \citep{Howard1966,Matheson1990}.
To assess VoI for a variable $X$,
we first make $X$ an observation by adding a link $X\to D$,
and then test whether $X$ is material in the updated model \citep{Shachter2016}.

\begin{definition}[Value of information] \label{def:observation-incentive-sa}
  A node $X \in \evars \setminus \Descv{\decisionvar}$ in a
  single-decision SCIM $\scim$ has \emph{VoI}
  if it is material in the model $\scim_{X \to D}$ obtained
  by adding the edge $X \to D$ to $\scim$.
  A CID $\causalgraph$ \emph{admits VoI} for $X$ if $X$ has VoI in a
  a SCIM $\scim$ compatible with $\causalgraph$.
\end{definition}

Since \cref{def:observation-incentive-sa} adds an information link, it can only be applied to
non-descendants of the decision, lest cycles be created in the graph.
Fortunately, the
structural functions need not be adapted for the added link,
since there is no structural function associated with $D$.

We prove that the graphical criterion of \cref{def:requisite-observation} is
tight for both materiality and VoI, in that it
identifies every zero VoI node that can be identified from the graphical
structure (in a single decision setting).

\begin{theorem}[Value of information criterion]
  \label{th:observation-sa}
  A single decision CID $\causalgraph$ admits VoI for $X \in \evars \setminus \Descv{\decisionvar}$
    if and only if $X$ is a requisite observation in $\causalgraph_{X \to D}$, the graph 
    obtained by adding $X \to D$ to $\causalgraph$.
\end{theorem}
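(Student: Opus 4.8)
The plan is to prove the two directions of the biconditional separately. The ``only if'' direction is the soundness of the nonrequisiteness criterion, which amounts to showing: if $X$ is nonrequisite in $\causalgraph_{X\to D}$, then $X$ has zero VoI in \emph{every} compatible SCIM. Here I would follow the classical argument (\citealp{fagiuoli1998note,Lauritzen2001,Shachter2016}): since $X \perp \utilvars^D \mid (\Pad \cup \{D\}\setminus\{X\})$ in $\causalgraph_{X\to D}$, for any policy $\pi$ that uses $X$ there is a policy $\pi'$ that ignores $X$ and achieves the same expected utility --- intuitively, $\pi'$ picks, for each value of the other observations, a decision that $\pi$ would have picked for some value of $X$, and the d-separation guarantees the conditional distribution of $\utilvars^D$ given the decision and the non-$X$ observations does not actually depend on $X$. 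Concretely one shows $\EEs{\pi}[\totutilvar] = \sum \Pr(\Pad\setminus\{X\})\max_d \EEs{}[\totutilvar \mid d, \Pad\setminus\{X\}]$ is unaffected by whether $X\to D$ is present, so $\attutil(\scim_{X\to D}) = \attutil(\scim_{X\not\to D})$ and $X$ is immaterial in $\scim_{X\to D}$, i.e.\ has no VoI.

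The ``if'' direction is completeness, and this is where the real work lies: given that $X$ is requisite in $\causalgraph_{X\to D}$ --- i.e.\ there is a path from $X$ to some $Y\in\utilvars^D$ not blocked by $\Pad\cup\{D\}\setminus\{X\}$ --- I must \emph{construct} a SCIM $\scim$ compatible with $\causalgraph$ in which $X$ has strictly positive VoI. The idea is to build a SCIM in which information about $X$ demonstrably changes the optimal decision and raises attainable utility. I would first extract a minimal d-connecting path $p$ from $X$ to $Y$ in $\causalgraph_{X\to D}$ given $\Pad\cup\{D\}\setminus\{X\}$, and analyse its shape: because $D$ and $X$'s fellow observations are in the conditioning set, the colliders on $p$ must have descendants in $\Pad$, and the path must ``enter'' the utility-relevant region in a way the decision can exploit. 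Then I would design the structural functions so that all variables off this path are constant (contributing nothing), while along $p$ the relevant variables behave as parity/XOR-type functions of their on-path parents and their private exogenous noise, chosen so that (i) $X$ is genuinely correlated with $Y$'s value conditional on the other observations, and (ii) the decision $D$ can, by reading $X$, pick a value that correlates favourably with $Y$ and hence with utility. The colliders need care: for a collider $A\to C\gets B$ with a descendant observation $O\in\Pad$, I would make $C$ a deterministic XOR of $A$ and $B$ (relayed into $O$) so that conditioning on $O$ really does create the $A$--$B$ dependence that the path relies on.

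The main obstacle I expect is precisely this path-based construction in the presence of colliders whose activating descendants are the decision's observations --- making sure the ``information'' that flows from $X$ to the utility node along $p$ is simultaneously (a) present given the actual observations, (b) usable by the policy, and (c) not short-circuited or cancelled by the auxiliary wiring needed to relay colliders into $\Pad$. A clean way to manage this is to prove a lemma isolating the inductive structure: walk along $p$ from $Y$ back to $X$, and at each step maintain the invariant that the ``message'' variable carries one bit of useful information conditional on everything observed so far, with a final expected-utility computation showing $\attutil(\scim_{X\to D}) > \attutil(\scim)$. One also has to rule out degenerate cases (e.g.\ $X$ itself being a parent of $D$ already, or $X$ being non-ancestral to $\utilvars^D$, which is excluded by requisiteness) and confirm the constructed model respects the domain and independence constraints of \cref{def:scim}. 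Once the construction and its expected-utility inequality are verified, the theorem follows, and the same construction witnesses that the criterion is tight for materiality as well.
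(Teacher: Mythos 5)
Your proposal is correct and follows essentially the same route as the paper: soundness via the observation that an optimal policy can always be chosen to ignore a d-separated (nonrequisite) observation, and completeness via an explicit parity/XOR construction along a minimal d-connecting path from $X$ to $\utilvars^D$, with colliders activated by relaying their values into observations in $\Pad$ (the paper's construction uses multiplication over $\{-1,1\}$, which is the same parity trick, and handles the same degenerate node-coincidence cases you flag). The only organizational difference is that the paper builds this construction once inside the response-incentive completeness lemma and derives VoI completeness as a corollary, whereas you would instantiate it directly.
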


The soundness direction (i.e.\ the \emph{only if} direction) follows from d-separation \citep{fagiuoli1998note,Lauritzen2001,Shachter2016}.
In contrast, the completeness direction does not follow from the
completeness property of d-separation.
The d-connectedness of $X$ to $\utilvars$ implies that $\utilvars$ may be conditionally dependent on $X$.
It does not imply, however, that the expectation of $\utilvars$ or the utility attainable under an optimal policy will change.
Instead, our proof (\cref{sec:appendix-voi}) constructs a SCIM such that $X$ is material.
This differs 
from a previous attempt by \citet{nielsen1999welldefined},
as discussed in Related Work.

We apply the graphical criterion to the grade prediction example in \cref{fig:race-a}.
One can see that the predictor has an incentive to use the incoming student's high
school but not gender.
This makes intuitive sense, given
that \emph{gender} provides no information useful for predicting the university grade in this example.

 \section{Response Incentives}\label{sec:response}
There are two ways to understand a material observation. One is that it provides useful information. From this perspective, a natural generalisation is VoI, as described in the previous section. An alternative perspective is that a material observation is one that influences optimal decisions. Under this interpretation, the natural generalisation is the set of all (observed and unobserved) variables that influence the decision. We say that these variables have a response incentive.\footnote{The term \emph{responsiveness} \citep{Heckerman1995,Shachter2016}
  has a related but not identical meaning --
it refers to whether a decision $D$ affects a variable $X$
rather than whether $X$ affects $D$.}

\begin{definition}[Response incentive]\label{def:response-incentive}
Let $\scim$ be a single-decision SCIM.\@
A policy $\pi$ \emph{responds} to a variable $X\in \rivars$ if
there exists some intervention $\Do(\incentivevar = \incentiveval)$
and some setting $\exovars = \exovals$, such that
$\decisionvar_\incentiveval(\exovals) \ne \decisionvar(\exovals)$.
The variable
$\incentivevar$ has a \emph{response incentive} if all optimal
policies respond to $X$.

A CID \emph{admits} a response incentive on $\incentivevar$
if it is compatible with a SCIM that has a response incentive on
$\incentivevar$.
\end{definition}

For a response incentive on $X$ to be possible, there must be:
i) a directed path $X \pathto D$, and ii) an incentive for $D$
to use information from that path.
For example, in \cref{fig:race-a}, \emph{gender} has a directed path to the decision
but it does not provide any information about the likely grade, so there is no response incentive.
The graphical criterion for RI builds on a modified graph with nonrequisite
information links removed.

\begin{definition}[Minimal reduction; \citealp{Lauritzen2001}]
  \label{def:reduced-graph}
  The \emph{minimal reduction} $\reducedgraph$ of a single-decision CID $\causalgraph$
  is the result of removing from $\causalgraph$ all information links from nonrequisite observations.
\end{definition}

The presence (or absence) of a path $X \pathto D$ in the minimal reduction tells us whether
a response incentive can occur.~\looseness=-1

\begin{theorem}[Response incentive criterion]\label{theorem:ri-graph-criterion}
A single-decision CID $\cid$ admits a response
incentive on $\incentivevar \in \rivars$ if and only if 
the minimal reduction $\reducedgraph$ has a 
directed path 
$\incentivevar \pathto \decisionvar$.
\end{theorem}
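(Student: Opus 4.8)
The plan is to prove both directions by relating response incentives to requisite observations in the minimal reduction $\reducedgraph$. For the \emph{soundness} direction (if $\scim$ admits a response incentive on $X$, then $\reducedgraph$ has a path $X \pathto D$), I would argue the contrapositive. Suppose $\reducedgraph$ has no directed path $X \pathto D$. I want to show there is an optimal policy that does not respond to $X$. The key idea: an optimal policy can always be taken to depend only on the requisite observations (this follows from \cref{th:observation-sa} / the materiality characterization — nonrequisite links can be removed without loss of attainable utility). So fix an optimal policy $\pi$ of $\scims{\pi}$ that is a function only of the requisite parents $\Pad^{\mathrm{req}}$. If $X \not\pathto D$ in $\reducedgraph$, then no requisite parent of $D$ is a descendant of $X$ in $\reducedgraph$; I would then show that intervening on $X$ cannot change the value of any requisite parent along paths that survive in $\reducedgraph$, hence cannot change $D(\exovals)$ under $\pi$. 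The subtlety is that interventions $\Do(X = x)$ act in the full graph $\causalgraph$, not $\reducedgraph$, so I must check that the only way $X$ could reach $D$ in $\causalgraph$ is through a nonrequisite link — which is exactly what ``no path in $\reducedgraph$'' rules out, since $\reducedgraph$ and $\causalgraph$ differ only by nonrequisite information links into $D$, and a minimal path $X \pathto D$ in $\causalgraph$ must end with such a link if it is absent from $\reducedgraph$.

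For the \emph{completeness} direction (if $\reducedgraph$ has a directed path $X \pathto D$, then some compatible SCIM admits a response incentive on $X$), I would construct an explicit SCIM. Take a minimal directed path $X = V_0 \to V_1 \to \dots \to V_k = D$ in $\reducedgraph$. Because $V_{k-1} \to D$ is a requisite information link (it survives in $\reducedgraph$), \cref{th:observation-sa} guarantees a SCIM compatible with $\causalgraph$ in which $V_{k-1}$ is material, i.e.\ every optimal policy genuinely uses the value of $V_{k-1}$. I would then engineer the structural functions along the path $V_0 \to \dots \to V_{k-1}$ so that the value of $V_{k-1}$ is a sensitive (e.g.\ injective, or at least non-constant) function of $X$ along that path — for instance, make each $V_i$ on the path copy the relevant bit of information forward, while keeping all off-path variables' functions as in the materiality construction. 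The claim is then that intervening $\Do(X = x)$ propagates a change to $V_{k-1}$, which a utility-optimal policy must react to, so $D_x(\exovals) \neq D(\exovals)$ for a suitable $\exovals$; hence every optimal policy responds to $X$.

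The main obstacle I anticipate is the completeness construction: I need to simultaneously (a) keep $V_{k-1}$ material — which constrains the functions of the utility-relevant part of the graph — and (b) wire the path from $X$ to $V_{k-1}$ so that the materially-relevant content of $V_{k-1}$ actually varies with $X$, without accidentally making some other observation nonrequisite or creating a confound that lets an optimal policy ignore $V_{k-1}$. Handling the case where the path from $X$ into $D$ passes through other \emph{observed} nodes $V_i \in \Pad$ is delicate: an optimal policy sees those nodes too, so I must ensure the information about $X$ is only recoverable through the intended channel, or argue that it does not matter because any route by which the policy learns about $X$ still forces a response. I would also need to double-check the edge case $k=1$ (i.e.\ $X \to D$ directly), where the statement reduces to: a requisite observation admits a response incentive — which should follow almost immediately from materiality of $X$, since changing $X$'s value changes the input to every policy that depends on it, and materiality forces optimal policies to depend on it. A secondary technical point is confirming that ``some optimal policy ignores $X$'' upgrades to ``no response incentive'' in the sense of \cref{def:response-incentive}, which requires the definition's ``all optimal policies respond'' quantifier — so exhibiting one non-responsive optimal policy suffices for the negative direction.
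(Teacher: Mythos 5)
Your soundness argument is essentially the paper's: the paper formalizes ``an optimal policy can be taken to depend only on requisite observations'' as \cref{le:reduced-optimal-policy}, and then notes that for such a policy $\tilde\pi$ the SCMs $\scim_{\tilde\pi}$ and $\rscim_{\tilde\pi}$ coincide, so causal irrelevance of $X$ for $D$ in the reduced graph transfers back to the original model. Your observation that any $X \pathto D$ path present in $\causalgraph$ but absent from $\reducedgraph$ must terminate in a nonrequisite information link is exactly the right reason this works, and exhibiting one non-responsive optimal policy does suffice, as you note.

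The completeness direction, however, has a genuine gap. First, a structural problem: you invoke \cref{th:observation-sa} to obtain a SCIM in which $V_{k-1}$ is material, but in the paper the completeness half of that theorem (\cref{le:voi-completeness}) is itself \emph{derived from} the response-incentive completeness lemma (\cref{theorem:ri-graph-criterion-completeness}), so this route is circular unless you supply an independent materiality construction --- and that construction is precisely the hard content you are deferring. Second, and more substantively, materiality of $V_{k-1}$ is the wrong invariant to carry: it says that severing the link $V_{k-1}\to D$ lowers attainable utility, not that the \emph{particular} perturbation of $V_{k-1}$ induced by $\doo(X=x)$ changes the output of \emph{every} optimal policy at some $\exovals$. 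The difficulty is that $V_{k-1}$ may be requisite only because it is d-connected to a downstream utility through a path containing colliders, each of which must be read out through yet another observation. The paper's construction (\cref{fig:ri-completeness-2}) confronts this directly: it routes the $X\pathto D$ path through a single requisite observation $W$, builds a parity chain over $\{-1,1\}$ along the d-connecting path so that the optimal behaviour is $D = W\cdot\prod_i O^i = S^m$ with expected utility $1$, and sets $W = Z = X\cdot S^0$ so that the intervention $X=0$ destroys the information in $W$, rendering $S^m$ independent of all (post-intervention) observations. Then $\EEs{\pi^*}[U_{D_{X=0}}]=0 \ne 1 = \EEs{\pi^*}[U]$ for any optimal policy $\pi^*$, which forces $D_{X=0}(\exovals)\ne D(\exovals)$ for some $\exovals$. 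This expected-utility comparison under the nested counterfactual is the step that upgrades ``the policy uses $V_{k-1}$'' to ``all optimal policies respond to $X$,'' and it is absent from your sketch; without it, the worries you yourself raise (that an optimal policy might depend on $V_{k-1}$ in a way insensitive to the specific change propagated from $X$, or might recover the relevant information through another observed node) remain unanswered.
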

 \enlargethispage{\baselineskip}
\begin{proof}
  The \emph{if} (completeness) direction is proved in
  \cref{theorem:ri-graph-criterion-completeness} in \cref{sec:appendix-ri}.
  For the soundness direction,
assume that for $\cid$, the minimal reduction $\rcid$ does not contain a directed path $\incentivevar \pathto \decisionvar$.
Let $\scim = \scimdef$ be any SCIM compatible with $\cid$.
Let $\rscim = \left\langle \rcid,\exovars,\structfns,\exoprob \right\rangle$
be $\scim$, but with the minimal reduction $\rcid$.
By \cref{le:reduced-optimal-policy} in \cref{sec:proofs}, there exists a $\rcid$-respecting policy $\tilde \pi$ that is optimal in $\scim$.
In $\rscim_{\tilde \pi}$,
$\incentivevar$ is causally irrelevant for $\decisionvar$ so $\decisionvar(\exovals) = \decisionvar_x(\exovals)$.
Furthermore, $\scim_{\tilde \pi}$ and $\rscim_{\tilde \pi}$ are the same SCM,
with the functions $\structfns \cup \{\tilde \pi\}$.
So $\decisionvar(\exovals) = \decisionvar_x(\exovals)$ also in
$\scim_{\tilde\pi}$, which means that there is an optimal policy in $\scim$ that
does not respond to interventions on $X$ for any $\exovals$.
\end{proof}

The intuition behind the proof is that an optimal decision only responds to
effects that propagate to one of its requisite observations.
For the completeness direction, we show in \cref{sec:appendix-ri}
that if $X \pathto D$ is present in the minimal reduction $\reducedgraph$, then
we can select a SCIM $\scim$ compatible with $\cid$ such that $D$ receives useful
information along that path, that any optimal policy must respond to.

In a safety setting, it may be desirable for an AI system to have an incentive to respond to its shutdown button,
so that when asked to shut down, it does so \citep{Hadfield-Menell2016osg}.
In a fairness setting, on the other hand, a response incentive may be a cause
for concern, as illustrated next.~\looseness=-1

\begin{figure}
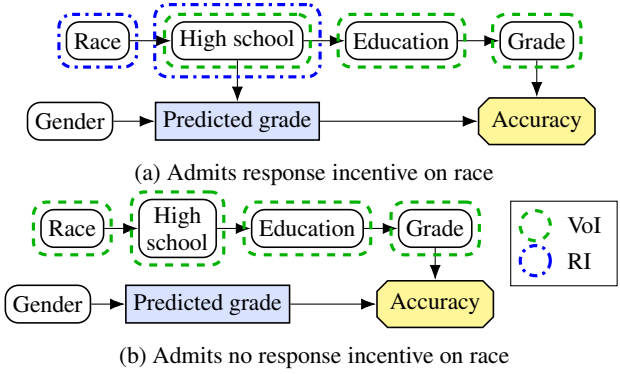

\begin{subfigure}[t]{\linewidth}
    \centering
      \begin{influence-diagram}
    \setrectangularnodes
    \setcompactsize
    \tikzset{node distance=5.5mm and 5.5mm}

\node (R) [] {Race};
      \node (S) [right= of R] {High school};
      \node (E) [right= of S] {Education};
      \node (Gr) [right=of E] {Grade};
\node (D) [below=of S,decision] {Predicted grade};
      \node (Ge) [left= of D] {Gender};
      
      \node (U) at (Gr|-D) [utility] {Accuracy};

    \draw[->]
      (R) edge (S)
      (S) edge (E)
      (S) edge (D)
      (E) edge (Gr)
      (Gr) edge (U)
      (Ge) edge (D)
      (D) edge (U)
    ;

\observationincentive{S}
    \observationincentive{E}
    \observationincentive{Gr}
\responseincentive{R}
    \responseincentive[inner sep=2.2mm]{S}

\end{influence-diagram}

 \caption{
        Admits response incentive on race
    }\label{fig:race-a}
\end{subfigure}
  \begin{subfigure}[t]{\linewidth}
    \centering
      \begin{influence-diagram}
    \setrectangularnodes
    \setcompactsize
    \tikzset{node distance=3.5mm and 4.5mm}

    \node (R) [] {Race};
      \node (S) [right= of R] {High\\ school};
      \node (E) [right= of S] {Education};
      \node (Gr) [right=of E] {Grade};
\node (D) [below=of S, xshift=4mm, decision] {Predicted grade};
      \node (Ge) [left= of D] {Gender};
      
      \node (U) at (Gr|-D) [utility] {Accuracy};

    \draw[->]
      (R) edge (S)
      (S) edge (E)
(E) edge (Gr)
      (Gr) edge (U)
      (Ge) edge (D)
      (D) edge (U)
;

\observationincentive{R}
\observationincentive{S}
    \observationincentive{E}
    \observationincentive{Gr}

    \cidlegend[right = 5mm of Gr, yshift=-2mm]{
      \legendrow{observation incentive}{VoI} \\
      \legendrow{response incentive}{RI} \\
}
  \end{influence-diagram}

     \caption{
      Admits no response incentive on race
    }\label{fig:race-b}
  \end{subfigure}
  \caption{In (a), the admissible incentives of the grade prediction example from \cref{fig:race-preview} are shown,
      including a response incentive on race.
      In (b), the predictor no-longer has access to the students' high school,
      and hence there can no-longer be any response incentive on race.~\looseness=-1
  }\label{fig:race}
\end{figure}
 \paragraph{Incentivised unfairness}
Response incentives are closely related to counterfactual fairness \citep{kusner2017counterfactual,kilbertus2017discrimination}.
A prediction --- or more generally a decision --- is considered counterfactually
unfair if a change to a \emph{sensitive attribute} like race or gender would
change the decision.

\begin{definition}[Counterfactual fairness; {\citealp{kusner2017counterfactual}}]
A policy $\pi$ is \emph{counterfactually fair} with respect to a
  sensitive attribute $A$ if
\begin{equation*}
    \label{eq:cf}
    \Prs{\pi}\left(
      \decisionvar_{a'} = \decisionval\mid \pad, a \right) =
    \Prs{\pi}\left(
      \decisionvar = \decisionval \mid \pad, a\right)
  \end{equation*}
  for every decision $\decisionval \in \dom(\decisionvar)$,
  every context $\pad \in \dom(\Pad)$, and
  every pair of attributes $a, a' \in \dom(A)$ with $\Pr(\pad, a) > 0$.
\end{definition}
 
A response incentive on a sensitive attribute indicates that counterfactual unfairness is incentivised,
as it implies that \emph{all} optimal policies are counterfactually unfair:

\begin{restatable}[Counterfactual fairness and response incentives]{theorem}{theoremcffair}\label{theorem:counterfactual-fairness}
In a single-decision SCIM $\scim$ with a sensitive attribute $A\in\rivars$,
all optimal policies $\pi^*$ are counterfactually unfair
with respect to $A$ if and
only if $A$ has a response incentive.
\end{restatable}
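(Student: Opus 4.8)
The plan is to prove the two directions of the biconditional by relating counterfactual fairness to the response-incentive condition of \cref{def:response-incentive}, and then invoke \cref{theorem:ri-graph-criterion} only as needed (in fact the statement is about a particular SCIM, so I will work directly with the definition of response incentive rather than the graphical criterion). Recall that $A$ has a response incentive iff \emph{all} optimal policies respond to $A$, i.e.\ for every optimal $\pi^*$ there exist an intervention $\doo(A=a')$ and an exogenous setting $\exovals$ with $\decisionvar_{a'}(\exovals)\ne \decisionvar(\exovals)$ in $\scim_{\pi^*}$. Counterfactual unfairness of $\pi^*$ means the displayed equation fails for some $\decisionval,\pad,a,a'$ with $\Pr(\pad,a)>0$. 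So the whole theorem reduces to the claim: \emph{a policy $\pi$ responds to $A$ (in the sense of \cref{def:response-incentive}) if and only if $\pi$ is counterfactually unfair with respect to $A$.} Proving this equivalence policy-by-policy immediately gives both directions of the theorem, since ``all optimal policies respond'' $\iff$ ``all optimal policies are counterfactually unfair.''

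For the direction ``counterfactually unfair $\Rightarrow$ responds'': suppose $\Prs{\pi}(\decisionvar_{a'}=\decisionval\mid \pad,a)\ne \Prs{\pi}(\decisionvar=\decisionval\mid \pad,a)$ for some $\decisionval,\pad,a,a'$. Unpacking these conditional probabilities as sums over exogenous settings $\exovals$ consistent with $(\pad,a)$ (using that $A$, being a sensitive attribute, is a non-descendant of $D$, so $\Pr(\pad,a)$ is policy-independent and the conditioning event is well-defined in terms of $\exovars$), the inequality of the two conditional distributions forces the existence of at least one $\exovals$ in the conditioning event for which $\decisionvar_{a'}(\exovals)\ne \decisionvar(\exovals)$ — otherwise the two potential-response variables would agree pointwise on the conditioning event and the conditional distributions would coincide. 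That $\exovals$ together with $\doo(A=a')$ witnesses that $\pi$ responds to $A$.

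For the converse ``responds $\Rightarrow$ counterfactually unfair'': suppose there are $a'$ and $\exovals^0$ with $\decisionvar_{a'}(\exovals^0)\ne \decisionvar(\exovals^0)$. Let $\pad = \Pad(\exovals^0)$ and $a = A(\exovals^0)$ be the observation/attribute values realized at $\exovals^0$; note $\Pr(\pad,a)\ge \exoprob(\exovals^0)>0$ (we may assume without loss that every exogenous outcome has positive probability, or restrict attention to the support). Then $\decisionval \coloneqq \decisionvar_{a'}(\exovals^0)$ receives, within the conditioning event $\{(\pad,a)\}$, a contribution to $\Prs{\pi}(\decisionvar_{a'}=\decisionval\mid \pad,a)$ from $\exovals^0$ that is \emph{not} matched by an equal contribution to $\Prs{\pi}(\decisionvar=\decisionval\mid\pad,a)$, because at $\exovals^0$ the un-intervened $D$ takes a different value. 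The subtlety is that summing over the whole conditioning event could in principle cancel this discrepancy; to rule that out I will argue more carefully that the mismatch cannot be exactly cancelled — concretely, I will compare the full distributions $\Prs{\pi}(\decisionvar_{a'}\mid\pad,a)$ and $\Prs{\pi}(\decisionvar\mid\pad,a)$ and observe that if they were equal for \emph{every} such context then, by the first direction's argument run in reverse, $\decisionvar_{a'}$ and $\decisionvar$ would have to agree on every $\exovals$ in every positive-probability conditioning event, contradicting the existence of $\exovals^0$. (Here I use that $\exovals^0$ necessarily lies in the conditioning event determined by its own realized values $\pad,a$.)

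The main obstacle is the bookkeeping around conditioning events and the possibility of cancellation when passing between pointwise statements about $\exovals$ and statements about conditional distributions: I must be careful that $A$'s status as a sensitive attribute (non-descendant of $D$) makes $\Pr(\pad,a)$ well-defined and policy-independent, that the intervention $\doo(A=a')$ does not disturb the distribution over the exogenous variables, and that "responds to $A$" as defined really does localize to a single $\exovals$ inside a single conditioning event $(\pad,a)$. Once the policy-by-policy equivalence ``$\pi$ responds to $A$ $\iff$ $\pi$ is counterfactually unfair w.r.t.\ $A$'' is established, the theorem follows by quantifying over optimal policies, with no further appeal to the graphical machinery.
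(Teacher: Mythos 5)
There is a genuine gap. Your whole argument rests on the policy-by-policy equivalence ``$\pi$ responds to $A$ iff $\pi$ is counterfactually unfair w.r.t.\ $A$,'' but only one direction of that equivalence is true. ``Counterfactually unfair $\Rightarrow$ responds'' is fine (pointwise agreement of $\decisionvar_{a'}$ and $\decisionvar$ on the conditioning event would force the conditional distributions to coincide). The converse is false, for exactly the cancellation reason you flag: a stochastic policy can change its decision at every $\exovals$ under $\doo(A=a')$ while leaving every conditional distribution untouched. Concretely, take $A,\decisionvar\in\{0,1\}$, $\exovarv{\decisionvar}$ uniform, and $\pi(a,\exovalv{\decisionvar})=a\oplus\exovalv{\decisionvar}$ in a SCIM where utility does not depend on $\decisionvar$ (so $\pi$ is optimal). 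Then $\decisionvar_{a'}(\exovals)\neq\decisionvar(\exovals)$ for all $\exovals$ with $A(\exovals)\neq a'$, so $\pi$ responds, yet both $\decisionvar$ and $\decisionvar_{a'}$ are uniform given any $(\pad,a)$, so $\pi$ is counterfactually fair. Your proposed repair --- ``run the first direction's argument in reverse'' --- is invalid: it would establish ``distributions equal $\Rightarrow$ pointwise equal,'' which is precisely the false converse that this example refutes.

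The theorem survives because its quantifiers do not require your lemma: one only needs ``some optimal policy is counterfactually fair $\Rightarrow$ \emph{some} (possibly different) optimal policy does not respond.'' This is where the paper's proof does real work that your plan omits. Given a counterfactually fair optimal $\pi$, the paper first shows the supports satisfy $\supp_{\pi}(\decisionvar\mid\pad)=\supp_{\pi}(\decisionvar_{a}\mid\pad)$ for all $a$, then defines a new deterministic optimal policy $\pi^*(\pad)\coloneqq\min\supp_{\pi}(\decisionvar\mid\pad)$ and argues that \emph{this} policy satisfies $\decisionvar_a(\exovals)=\decisionvar(\exovals)$ everywhere, hence there is no response incentive. (In the XOR example, this construction replaces the fair-but-responsive randomized policy with the constant policy $\decisionvar=0$, which is optimal and non-responsive.) Your other direction --- non-responding implies fair, hence all-unfair implies response incentive --- matches the paper and is correct; but without the policy-switching construction the ``response incentive $\Rightarrow$ all optimal policies unfair'' direction does not go through.
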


 The proof is given in \cref{sec:appendix-fairness}.

A response incentive on a sensitive attribute means
that counterfactual unfairness is not just possible, but incentivised.
As a result, it has a more restrictive graphical criterion.
The graphical criterion for counterfactual fairness states that a decision can only be counterfactually
unfair with respect to a sensitive attribute if that attribute is an ancestor of the decision \citep[Lemma 1]{kusner2017counterfactual}.
For example, in the grade prediction example of \cref{fig:race-a},
it is possible for a predictor to be counterfactually unfair
with respect to either \emph{gender} or \emph{race}, because both are ancestors of the decision.
The response incentive criterion can tell us in which case counterfactual unfairness is actually incentivised.
In this example, the minimal reduction includes the edge from
\emph{high school} to \emph{predicted grade} and hence the directed path from \emph{race} to \emph{predicted grade}.
However, it excludes the edge from \emph{gender} to \emph{predicted grade}.
This means that the agent is incentivised to be counterfactually unfair with respect to
\emph{race} but not to \emph{gender}.~\looseness=-1

Based on this, how should the system be redesigned? According to the response incentive criterion,
the most important change is to remove the path from
\emph{race} to \emph{predicted grade} in the minimal reduction. This
can be done by removing the agent's access to \emph{high school}.
This change is implemented in \cref{fig:race-b}, where there is no
response incentive on either
sensitive variable.~\looseness=-1

Value of information is also related to fairness.
For a sensitive variable that is not a parent of the decision, positive VoI means that \emph{if} the predictor gained access to its value,
then the predictor would use it.
For example, if in \cref{fig:race-b} an edge is added from \emph{race} to \emph{predicted grade}, then unfair behaviour will result.
In practice, such access can result from unanticipated correlations between
the sensitive attribute and parents of the decision, rather
than the system being given direct access to the attribute.
Analysing VoI may help detect such problems at an early stage.
However, VoI is less closely related to counterfactual fairness than response incentives.
In particular, \emph{race} lacks VoI in \cref{fig:race-a}, but counterfactual unfairness is incentivised.
On the other hand, \cref{fig:race-b} admits positive VoI for \emph{race}, but counterfactual unfairness is not incentivised.

The incentive approach is not restricted to counterfactual fairness.
For any fairness definition, one could assess whether that kind of unfairness is incentivised
by checking whether it is present under all optimal policies.

 \section{Value of Control}
\label{sec:control-incentives}
\enlargethispage{\baselineskip}

A variable has VoC if a decision-maker could benefit from setting its value \citep{shachter1986evaluating,Matheson1990,Shachter2010}.
Concretely, we ask whether the attainable utility can be increased by
letting the agent decide the structural function for the variable.

\begin{definition}[Value of control]\label{def:control-incentive-sa}
  In a single-decision SCIM $\scim$, a non-decision node
    $X$ has \emph{positive value of control}
  if
  \[
    \max_{\pi}\EEs{\pi}[\totutilvar]
    <
    \max_{\pi, \gx}\EEs{\pi}[\totutilvar_{\gx}]
    \]
    where $\gx:\sfsig{X}$ is a soft intervention at $X$,
    i.e.\ a new structural function for $X$ that respects the graph.
\end{definition}

A CID $\causalgraph$ \emph{admits positive value of control} for $\incentivevar$ if
there exists a SCIM $\scim$ compatible with $\causalgraph$ where $\incentivevar$ has positive value of control.
This can be deduced from the graph, using again the minimal reduction
(\cref{def:reduced-graph}) to rule out effects through observations that an
optimal policy can ignore.~\looseness=-1

\begin{theorem}[Value of control criterion]
  \label{th:soft-sa}
  A single-decision CID $\causalgraph$ admits positive value of control for a
  node $X\in\evars \setminus \{D\}$
  if and only if there is a directed path
  $X\pathto \utilvars$ in the minimal reduction $\reducedgraph$.
\end{theorem}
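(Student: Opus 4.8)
The plan is to prove soundness (the \emph{only if} direction) via the reduced‑policy lemma, exactly as in the soundness proof of \cref{theorem:ri-graph-criterion}, and completeness (the \emph{if} direction) by an explicit construction that splits into two cases. For soundness, suppose $\reducedgraph$ has no directed path $X \pathto \utilvars$; fix any SCIM $\scim$ compatible with $\causalgraph$ and any soft intervention $\gx$, and let $\scim'$ be $\scim$ with $\fv{X}$ replaced by $\gx$ (again compatible with $\causalgraph$, and with the same minimal reduction $\reducedgraph$). By the definition of a soft intervention, $\EEs{\pi}[\totutilvar_{\gx}]$ is the expected total utility in the SCM $\scim'_\pi$, so $\max_{\pi,\gx}\EEs{\pi}[\totutilvar_{\gx}] = \max_{\gx}\attutil(\scim')$; it therefore suffices to show $\attutil(\scim') \le \attutil(\scim)$ for every $\gx$, since $\gx = \fv{X}$ supplies the matching lower bound and hence $\max_{\pi,\gx}\EEs{\pi}[\totutilvar_{\gx}] = \max_\pi\EEs{\pi}[\totutilvar]$. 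By \cref{le:reduced-optimal-policy} there is a $\reducedgraph$‑respecting policy $\tilde\pi$ optimal in $\scim'$. Since $\tilde\pi$ reads only requisite observations, the SCM $\scim'_{\tilde\pi}$ is supported on a subgraph of $\reducedgraph$, in which $X$ is not an ancestor of any utility node; hence every utility variable takes the same value (as a function of $\exovals$) whether we use $\gx$ or $\fv{X}$, so $\attutil(\scim') = \EEs{\tilde\pi}[\totutilvar] \le \attutil(\scim)$, and $X$ has no positive value of control.

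For completeness, suppose $\reducedgraph$ has a directed path $X = V_0 \to V_1 \to \dots \to V_n$ with $V_n \in \utilvars$. I will construct a compatible SCIM in which $X$ has positive value of control, fixing every utility node not needed for the argument to the constant $0$. First, if such a path can be chosen with $\decisionvar$ not on it, give $V_1, \dots, V_{n-1}$ binary domains, let each $V_i$ copy $V_{i-1}$ (ignoring its other parents), set $V_n = V_{n-1}$ (or, when $n = 0$, keep just $X$ with a binary domain), and make every structural function that has $\decisionvar$ as a parent ignore $\decisionvar$, so $\decisionvar$ is causally irrelevant. Then, with $\fv{X}$ a fair coin, $\max_\pi \EEs{\pi}[\totutilvar] = \tfrac12$, while the intervention $\gx \equiv 1$ gives $\EEs{\pi}[\totutilvar_{\gx}] = 1$; hence $X$ has positive value of control.

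Otherwise every such path passes through $\decisionvar$; pick one, say with $V_j = \decisionvar$ and $1 \le j \le n-1$ (as $X \ne \decisionvar$ and utility nodes are not decisions). Then $V_{j-1}$ is a parent of $\decisionvar$ whose information link survives in $\reducedgraph$, hence a requisite observation of which $X$ is a $\reducedgraph$‑ancestor (via $X = V_0 \to \dots \to V_{j-1}$). By the completeness part of \cref{th:observation-sa}, there is a compatible SCIM $\scim_0$ in which $V_{j-1}$ is material. Starting from $\scim_0$, I would rewrite only the structural functions on $V_0, \dots, V_{j-1}$ so that the value emitted by $\fv{X}$ (resp.\ $\gx$) is relayed along the prefix and used at $V_{j-1}$ as a \emph{gate}: an ``off'' signal makes $V_{j-1}$ emit pure noise, disconnecting it from $\decisionvar$, while an ``on'' signal makes it emit exactly the value that $V_{j-1}$ produces under its function in $\scim_0$. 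Letting $\fv{X}$ be the constant ``off'' signal, $\max_\pi \EEs{\pi}[\totutilvar]$ equals the attainable utility of $\scim_0$ with $V_{j-1} \to \decisionvar$ removed, whereas $\gx$ set to the constant ``on'' signal recovers $\attutil(\scim_0)$, which is strictly larger since $V_{j-1}$ is material; hence $X$ has positive value of control. (When $X = V_{j-1}$ this is immediate: take $\gx$ equal to the function $V_{j-1}$ has in $\scim_0$, with $\fv{X}$ pure noise.)

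The main obstacle is making the gated construction of this last case precise. One must choose the prefix $V_0, \dots, V_{j-1}$ and design the gate so that (i) the ``off'' setting genuinely reduces the attainable utility to the ``$V_{j-1} \not \to \decisionvar$'' level of the materiality witness, (ii) the ``on'' setting restores $\attutil(\scim_0)$ even though the relay overwrites $V_{j-2}$, on which $V_{j-1}$'s function in $\scim_0$ may depend, and more generally does not clash with how the nodes $V_0,\dots,V_{j-2}$ and their other children are used in $\scim_0$. Dovetailing the gate with the materiality construction — for instance by taking the prefix along a shortest such path and choosing $\scim_0$ so that it does not route signal through those nodes — is the technical heart of the argument.
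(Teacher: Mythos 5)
Your soundness argument and your Case~1 of completeness match the paper's proof in substance (the paper phrases soundness as a chain of equalities through the reduced model using \cref{le:reduced-optimal-policy} and \cref{le:sigma-calculus-intervention}, but the content is the same). The genuine gap is in your Case~2, and it is exactly the one you flag: you cannot take the materiality witness $\scim_0$ from \cref{th:observation-sa} as a black box and then overwrite the structural functions along $V_0,\dots,V_{j-1}$ with a relay-plus-gate. The function of $V_{j-1}$ in $\scim_0$ may depend on $V_{j-2}$, which your relay has overwritten, so ``emit exactly the value that $V_{j-1}$ produces under its function in $\scim_0$'' is not well defined; the prefix nodes may have other children through which $\scim_0$ routes the very signal that makes $V_{j-1}$ material, so the modified model need not preserve either $\attutil(\scim_0)$ or the materiality of $V_{j-1}$. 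Even your parenthetical ``immediate'' sub-case is not immediate: materiality only compares $\scim_0$ with and without the information link $V_{j-1}\to\decisionvar$, not against a model in which $V_{j-1}$'s mechanism is replaced by noise, whose attainable utility could a priori be larger or smaller than either quantity.

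The paper closes this gap by never invoking VoI completeness abstractly. Its \cref{le:voc-completeness} reuses the fully explicit construction from the response-incentive completeness proof (\cref{theorem:ri-graph-criterion-completeness}), in which a node $X$ on a directed path to $\decisionvar$ in $\reducedgraph$ already acts as a gate via $Z = X\cdot S^0$ and $W = Z$: setting $\fv{X}\equiv 0$ zeroes out the requisite observation $W$, making $S^m$ independent of $\Pad$ so that the attainable utility is exactly $0$, while the soft intervention $\gx\equiv 1$ restores attainable utility $1$. Your plan is repairable, but only by building the Case~2 witness explicitly in this way (or by proving a much stronger, structurally controlled version of VoI completeness); as written, Case~2 is incomplete.
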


\begin{proof}
    The \emph{if} (completeness) direction is proved in \cref{le:voc-completeness}.
    The proof of \emph{only if} (soundness) is as follows.
Let $\scim = \scimdef$ be a single-decision SCIM.\@
    Let $\scims{\gx}$ be $\scim$, but with the structural function $\fv{X}$
    replaced with $\gx$.
    Let $\rscim$ and $\rscim_{\gx}$ be the same SCIMs, respectively, but
    replacing each graph with the minimal reduction $\reducedgraph$.

    Recall that $ \EEs{\pi}[\totutilvar_{\gx}]$ is defined by applying the soft
    intervention $\gx$ to the (policy-completed) SCM $\scims{\pi}$.
    However, this is equivalent to applying the policy $\pi$ to the modified
    SCIM $\scims{\gx}$, as the resulting SCMs are identical.
    Since $\scims{\gx}$ is a SCIM, \cref{le:reduced-optimal-policy} can be applied,
    to find a $\reducedgraph$-respecting optimal policy $\tilde\pi$ for
    $\scims{\gx}$.

    Consider now the expected utility under an arbitrary intervention $\gx$
    for a policy $\pi$ optimal for $\scims{\gx}$:
\begin{align*}
&\EEs{\pi}[\totutilvar_{\gx}] \text{ in $\scim$} \\
      &=\EEs{\pi}[\totutilvar] \text{ in $\scims{\gx}$} & \text{by SCM equivalence} \\
      &=\EEs{\tilde\pi}[\totutilvar] \text{ in $\scims{\gx}$} & \text{by \cref{le:reduced-optimal-policy}}\\
      &=\EEs{\tilde\pi}[\totutilvar] \text{ in $\rscim_{\gx}$} & \text{since $\tilde\pi$ is $\reducedgraph$-respecting} \\
        &=\EEs{\tilde\pi}[\totutilvar] \text{ in $\rscim$} & \text{by \cref{le:sigma-calculus-intervention}} \\
       &=\EEs{\tilde\pi}[\totutilvar] \text{ in $\scim$} & \text{only increasing the policy set} \\
       &\leq \max_{\pi^*}\EEs{\pi^*}[\totutilvar] \text{ in $\scim$} & \text{$\max$ dominates all elements.}
    \end{align*}
    This shows that $X$ must lack value of control.
\end{proof}

The proof of the completeness direction (\cref{sec:appendix-voc}) establishes
that if a path exists,
then a SCIM be selected where the intervention on $X$ can either directly control $U$ or increase the useful information available at $D$.

To apply this criterion to the content recommendation example (\cref{fig:fci-application1}), we first obtain the minimal reduction, which is identical
to the original graph.
Since all non-decision nodes are upstream of the utility in the minimal reduction, they all admit positive VoC.
Notably, this includes nodes like \emph{original user opinions} and \emph{model of user opinions}
that the decision has no ability to control according to the graphical structure.
In the next section, we propose \emph{instrumental control incentives}, which
incorporate the agent's limitations.~\looseness=-1

 \section{Instrumental Control Incentive}\label{sec:fci}

Would an agent use its decision to control a variable $X$?
This question has two parts: whether $X$ is useful to control (VoC),
and whether $X$ is possible to control (responsiveness).
As described in the previous section, VoC uses $\totutilvar_{\gx}$ to consider
the utility attainable from arbitrary control of $X$.
Meanwhile, $X_d$ describes the way $X$ can be controlled by $D$.
These notions can be combined with a nested counterfactual $\totutilvar_{X_d}$,
which expresses the effect that $D$ can have on $\totutilvar$ by controlling $X$.

\begin{definition}[Instrumental control incentive]\label{def:instrumental-goal}
In a single-decision SCIM $\scim$,
there is an \emph{instrumental control incentive} on a variable $X$ in decision context
$\pad$ if, for all optimal policies $\pi^*$,
\begin{equation}\label{eq:ci}
\EEs{\pi^*}{[\totutilvar_{\incentivevar_{\decisionval}} \mid \pad]}
\neq
\EEs{\pi^*}{[\totutilvar \mid \pad]}.
\end{equation}
\end{definition}

Conceptually, an instrumental control incentive can be interpreted as
follows.
If the agent got to choose $D$ to influence $X$ independently of how $D$
influences other aspects of the environment, would that choice matter?
We call it an \emph{instrumental} control incentive, as the control of $X$
is a tool for achieving utility
(cf.\ \emph{instrumental goals}
\citealp{omohundro2008aidrives,Bostrom2014}).
ICIs do not consider side-effects of the optimal policy:
for instance, it may be that all optimal policies affect $X$ in a particular
way, even if $X$ is a not an ancestor of any utility node ---
in such cases, no ICI is present.
Finally, in \posscite{pearl2001direct} terminology, an instrumental
control incentive corresponds to a \emph{natural indirect effect} from
$D$ to $U$ via $X$ in $\scim_{\pi^*}$, for all optimal policies $\pi^*$.

A CID $\cid$ \emph{admits} an instrumental control incentive on
$\incentivevar$ if $\cid$ is compatible with a SCIM $\scim$ with
an instrumental control incentive on $X$ for some decision context $\pad$.
The following theorem gives a sound and complete graphical criterion for
which CIDs admit instrumental control incentives.

\begin{theorem}[Instrumental Control Incentive Criterion]\label{theorem:ig-graph-criterion}
A single-decision CID $\cid$ admits an instrumental control incentive on
$\incentivevar \in \civars$ if and only if $\cid$ has a directed
path from the decision $\decisionvar$ to a utility node $\utilvar \in \utilvars$
that passes through $\incentivevar$,
i.e.\ a directed path $\decisionvar \pathto \incentivevar \pathto \utilvars$.
\end{theorem}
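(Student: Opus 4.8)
The plan is to prove the two directions separately, with soundness being the routine direction and completeness being the main work. For soundness (the ``only if''), suppose $\cid$ has no directed path $\decisionvar \pathto \incentivevar \pathto \utilvars$. Then either $\incentivevar$ is not a descendant of $\decisionvar$, or $\incentivevar$ has no directed path to any utility node, or both. In the first case, $\incentivevar_d(\exovals) = \incentivevar(\exovals)$ for every $\exovals$ and every $d$, so $\totutilvar_{\incentivevar_d} = \totutilvar_{\incentivevar} = \totutilvar$ as potential-response variables (here using that $\incentivevar$ being non-descendant means the nested intervention collapses), giving equality in \cref{eq:ci} trivially. In the second case, where $\incentivevar \not\pathto \utilvars$, intervening to set $\incentivevar$ to the value $\incentivevar_d(\exovals)$ has no effect on any utility node since no utility node is downstream of $\incentivevar$; hence $\totutilvar_{\incentivevar_d}(\exovals) = \totutilvar(\exovals)$ for all $\exovals$, again yielding equality in \cref{eq:ci} for every policy and every context. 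So no SCIM compatible with $\cid$ can admit an ICI on $\incentivevar$.

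For completeness (the ``if''), suppose $\cid$ contains a directed path $\decisionvar \to \cdots \to \incentivevar \to \cdots \to \utilvar$ for some $\utilvar \in \utilvars$. I would construct a SCIM $\scim$ compatible with $\cid$ and exhibit a decision context $\pad$ such that all optimal policies satisfy \cref{eq:ci}. The idea is to make the path carry a genuine signal: pick binary domains, let the relevant variables along the chosen path simply copy their predecessor's value (an identity/relay along the path), have all other structure nodes and all other parents of nodes on the path take a fixed constant, and let the single utility $\utilvar$ reward the agent for making $\incentivevar$ match some target determined by a parent of $\decisionvar$ (or, if $\decisionvar$ has no parents, simply reward $\incentivevar = 1$, which is attainable). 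One must be careful that $\decisionvar$ can in fact influence $\incentivevar$ and $\utilvar$ only through the chosen path in the constructed model — set all off-path edges out of $\decisionvar$ and off-path edges out of on-path nodes to be ignored by the downstream functions, so the natural indirect effect of $D$ on $U$ via $X$ equals the total effect. Then in the constructed SCIM the unique optimal policy propagates the target value down the path, so $\EEs{\pi^*}[\totutilvar_{\incentivevar_d} \mid \pad]$ (which measures the utility obtained when $X$ is set to what $D=\pi^*$ would make it, while the direct $D \to U$ influence is left at its default) differs from $\EEs{\pi^*}[\totutilvar \mid \pad]$, for an appropriate choice of $\pad$ where the default decision and the optimal decision disagree on the target.

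The main obstacle is the completeness construction: one has to ensure simultaneously that (i) there genuinely exist two distinct decisions whose effect on $\totutilvar$ via $\incentivevar$ differs — this requires $\decisionvar$ to have a ``handle'' on $\incentivevar$ along the path and $\incentivevar$ to have a ``handle'' on $\utilvar$ along the path, which is exactly what the directed path $\decisionvar \pathto \incentivevar \pathto \utilvars$ provides; (ii) the off-path structure does not accidentally wash out the signal or make the optimal policy indifferent in the chosen context $\pad$; and (iii) the nested counterfactual $\totutilvar_{\incentivevar_d}$ is correctly computed — in particular, $\totutilvar_{\incentivevar_d}$ applies the \emph{default} $D$ everywhere except that $X$ is forced to the value $X_d(\exovals)$, so the construction must make sure that the difference is not cancelled by the default-$D$ direct path into $\utilvar$. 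I expect this to reduce to choosing the utility function to depend on $\incentivevar$ in a way that is non-constant, and choosing a context $\pad$ witnessing that an optimal policy's choice of the signal differs from the default; the bookkeeping of which variables are on versus off the path, and verifying acyclicity and graph-respecting of all functions, is the tedious but non-deep part. The detailed construction is deferred to the appendix.
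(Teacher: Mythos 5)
Your soundness argument is the same as the paper's: the absence of a path $\decisionvar \pathto \incentivevar \pathto \utilvars$ splits into $\decisionvar \nopathto \incentivevar$ (so $\incentivevar_\decisionval(\exovals)=\incentivevar(\exovals)$ and the nested counterfactual collapses by consistency) or $\incentivevar \nopathto \utilvars$ (so intervening on $\incentivevar$ cannot move any utility node), and either way \cref{eq:ci} fails for every policy and context. That direction is fine as written.

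The completeness direction has the right skeleton (relay the signal along the chosen path $\decisionvar = Z^0 \to \cdots \to Z^n = \utilvar$, zero out everything off-path), but your primary construction --- ``let $\utilvar$ reward the agent for making $\incentivevar$ match some target determined by a parent of $\decisionvar$'' --- does not work in general. The utility node's structural function can only depend on its parents in $\cid$, and the only guaranteed route from a parent of $\decisionvar$ to $\utilvar$ is through $\decisionvar$ and then along the very relay path whose value you want to compare against the target. So $\utilvar$ receives a single relayed value and cannot compute a ``match'' between $\incentivevar$ and the target; in a generic CID there is no second channel carrying the target to $\utilvar$. The fix is exactly the construction you relegate to the no-parents special case, and it works unconditionally: make every on-path node copy its predecessor so that $\utilvar$ equals the relayed value of $\decisionvar$, set all off-path variables to a constant. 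Then the unique decision context is $\pad=\bm{0}$, every optimal policy outputs $1$ and gets $\EEs{\pi^*}[\totutilvar\mid\pad]=1$, while the nested counterfactual with $\decisionval=0$ forces $\incentivevar_{\decisionval}=0$ and hence $\EEs{\pi^*}[\totutilvar_{\incentivevar_\decisionval}\mid\pad]=0$. No information-carrying target, and no choice of context where ``the default and optimal decisions disagree,'' is needed --- the asymmetry comes entirely from picking the intervention value $\decisionval$ different from the optimal action. This is the paper's construction; your proposal should adopt it in all cases rather than only when $\decisionvar$ has no parents.
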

 
\begin{proof}Completeness (the \emph{if} direction) is proved in \cref{le:fci-completeness}.
    The proof of soundness is as follows.

Let $\scim$ be any SCIM compatible with $\cid$ and $\pi$ any policy for $\scim$.
We consider variables in the SCM $\scims{\pi}$.
If there is no directed path
$\decisionvar \pathto \incentivevar \pathto \utilvars$ in $\cid$,
then either
${\decisionvar \nopathto \incentivevar}$ or
${\incentivevar \nopathto \utilvars}$.
If ${\decisionvar \nopathto \incentivevar}$, then
${\incentivevar_{\decisionval}(\exovals)} =
    {\incentivevar(\exovals)}$
    for any setting $\exovals \in \dom(\exovars)$ and decision $\decisionval$
(\cref{theorem:path-causal-irrelevance}).
Therefore, ${\totutilvar(\exovals)}
    = {\totutilvar_{\incentivevar_{\decisionval}}(\exovals)}$.
Similarly, if ${\incentivevar \nopathto \utilvars}$ then
    ${\utilvar(\exovals)} =
    {\utilvar_\incentiveval(\exovals)}$ for every
    setting $\exovals \in \dom(\exovars)$,
$\incentiveval \in \dom(\incentivevar)$ and $\utilvar \in \utilvars$
    so ${\totutilvar(\exovals)} =
    {\totutilvar_{\incentivevar_{\decisionval}}(\exovals)}$.
In either case,
$\EEs{\pi}[\totutilvar \mid \pad]
= \EEs{\pi}[\totutilvar_{\incentivevar_\decisionval} \mid \pad]$
and there is no instrumental control incentive on $\incentivevar$.
\end{proof}

The logic behind the soundness proof above is that if there is no path from $D$ to $X$ to
$\utilvars$, then $D$ cannot have any effect on $\utilvars$ via $X$.
For the completeness direction proved in \cref{sec:appendix-fci},
we show how to construct a SCIM
so that $U_{X_d}$ differs from the non-intervened $U$ for any diagram with a
path $D\pathto X\pathto \utilvars$.

\iftwocol
  \newcommand{\casubfigwidth}{\columnwidth}
\else
  \newcommand{\casubfigwidth}{0.475\textwidth}
\fi

\begin{figure}
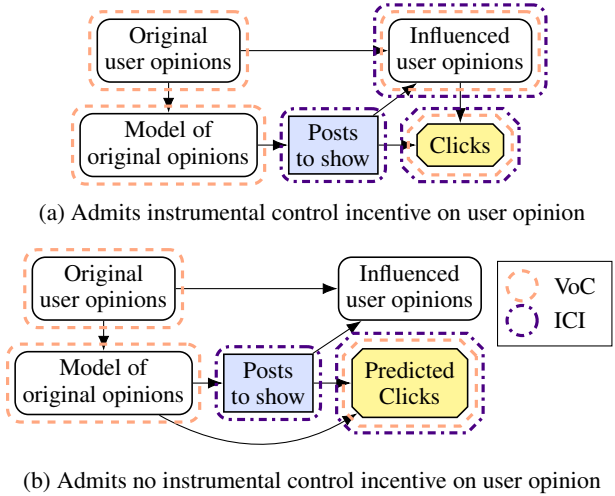

\begin{subfigure}[t]{\linewidth}
    \centering
    \begin{influence-diagram}
  \setrectangularnodes
  \setcompactsize

  \node (D) [decision] {Posts\\ to show};
  \node (M) [left = of D] {Model of \\ original opinions};
  \node (P1) [above =4mm of M] {Original\\ user opinions};
  \node (U) [right =5mm of D, utility] {Clicks};
  \node (P2) at (U|-P1) {Influenced\\ user opinions};

  \draw[->]
    (P1) edge (M)
    (M) edge[information] (D)
    (P1) edge (P2)
    (D) edge (P2)
    (D) edge (U)
    (P2) edge (U)
  ;

  \interventionincentive{P1}
  \interventionincentive{M}
  \interventionincentive{P2}
  \interventionincentive[uchamf]{U}
  \controlincentive{D}
  \controlincentive[inner sep=1.8mm]{P2}
  \controlincentive[uchamf,inner sep=0.9mm]{U}
\end{influence-diagram}
     \caption{
        Admits instrumental control incentive on user opinion
    }
    \label{fig:fci-application1}
  \end{subfigure}
  \vspace{2mm}

  \begin{subfigure}[t]{\linewidth}
  \centering
  \begin{influence-diagram}
  \setrectangularnodes
  \setcompactsize

  \node (D) [decision] {Posts\\ to show};
  \node (M) [left = of D] {Model of \\ original opinions};
  \node (P1) [above =4mm of M] {Original\\ user opinions};
  \node (U) [right =5mm of D, utility] {Predicted\\Clicks};
  \node (P2) at (U|-P1) {Influenced\\ user opinions};

  \draw[->]
    (P1) edge (M)
    (M) edge (D)
    (P1) edge (P2)
    (D) edge (P2)
    (D) edge (U)
  ;
  \path (M) edge[->, bend right] (U);

  \interventionincentive{P1}
  \interventionincentive{M}
  \controlincentive{D}
  \controlincentive[uchamf,inner sep=0.9mm]{U}
  \interventionincentive[uchamf]{U}

  \cidlegend[right = 2mm of P2, yshift=-2mm]{
      \legendrow{intervention incentive}{VoC} \\
      \legendrow{control incentive}{ICI} \\
}

\end{influence-diagram}

       \caption{
      Admits no instrumental control incentive on user opinion
  }
  \label{fig:fci-application2}
  \end{subfigure}
  \caption{In (a), the content recommendation example from \cref{fig:fci-preview} is shown to
      admit an instrumental control incentive on user opinion.
      This is avoided in (b) with a change to the objective.
      }\label{fig:fci-application}
\end{figure}
 
\pagebreak
Let us apply this criterion to the content recommendation example
in \cref{fig:fci-application1}.
The only nodes $X$ in this graph that lie on a path $D \pathto X \pathto \utilvars$
are \emph{clicks} and \emph{influenced user opinions}.
Since \emph{influenced user opinions} has an instrumental control incentive,
the agent may seek to influence that variable in order to attain utility.
For example, it may be easier to predict what content a more emotional user will
click on and therefore, a recommender may
achieve a higher click rate by introducing posts that
induce strong emotions.~\looseness=-1

How could we instead design the agent to maximise clicks without manipulating the user's opinions
(i.e.\ without an instrumental control incentive on \emph{influenced user opinions})?
As shown in \cref{fig:fci-application2}, we could redesign the system so that
instead of being rewarded for the true click rate, it is rewarded for the
clicks it would be predicted to have, based on a separately trained
model of the user's preferences.
An agent trained in this way would view any modification of user opinions as
irrelevant for improving its performance; however, it would still have an
instrumental control incentive for \emph{predicted clicks} so it would still deliver
desired content.
To avoid undesirable behaviour in practice, the click prediction must
truly predict whether the original user would click the content, rather than
baking in the effect of changes to the user's opinion from reading earlier posts.
This could be accomplished, for instance, by training a model to predict
how many clicks each post would receive if it was offered individually.~\looseness=-1

This dynamic is related to concerns about the long-term safety of AI systems.
For example, \citet{russell2019human} has hypothesised that an advanced AI system would seek to manipulate
its objective function (or human overseer) to obtain reward.
This can be understood as an instrumental control incentive on the objective function (or the overseer's behaviour).
A better understanding of incentives could therefore be relevant for designing safe systems in both the short and long-term.

 \pagebreak
\section{Related Work} \label{sec:related-work}

\paragraph{Causal influence diagrams}
\citet{Jern2011} and \citet{Kleiman-Weiner2015} define influence diagrams with causal edges,
and similarly use them to model decision-making of rational agents
(although they are less formal than us, and focus on human decision-making).

An informal precursor of the SCIM that also used structural functions (as
opposed to conditional probability distributions)
was the ``functional influence diagram'' \citep{dawid2002influence}.
The most similar alternative model is the Howard canonical form influence diagram \citep{howard1990influence,Heckerman1995}.
However, this only permits counterfactual reasoning downstream of decisions, which is
inadequate for defining the response incentive.
Similarly, the causality property for influence diagrams introduced by
\citet{Heckerman1994} and \citet{Shachter2010} only constrains the relationships to be partially causal
downstream of the decision
(though adding new decision-node parents to all nodes makes the diagram fully causal).
\Cref{app:causality-examples} shows by example why the stronger causality
property
is necessary for most of our incentive concepts.
~\looseness=-1

An open-source Python implementation of CIDs has recently been
developed\footnote{\url{https://github.com/causalincentives/pycid}}
\citep{Fox2021}.

\paragraph{Value of information and control} \Cref{th:soft-sa,th:observation-sa}
for value of information and value of control  build on previous work.
The concepts were first introduced by \citet{Howard1966} and
\citet{shachter1986evaluating}, respectively.
The VoI soundness proof follows previous proofs \citep{Shachter1998,Lauritzen2001},
while the VoI completeness proof is most similar to an attempted proof
by \citet{nielsen1999welldefined}. They propose the criterion $X \not\perp \utilvars^D\mid \Pa_D$ for requisite
nodes, which differs from \cref{eq:voi-criterion} in the conditioned set.
Taken literally,\footnote{Def.~\ref{def:d-separation} defines d-separation for potentially
overlapping sets.}
their criterion is unsound for requisite nodes and positive VoI.
For example, in \cref{fig:race-a}, \emph{High school} is d-separated from
\emph{accuracy} given $\Pad$, so their criterion would
fail to detect that \emph{High school} is requisite and admits VoI.\footnote{Furthermore, to prove that nodes meeting the d-connectedness property are requisite,
\citeauthor{nielsen1999welldefined} claim that
``$X$ is [requisite] for $D$ if $\Pr(\dom(U)\mid D,\Pad)$ is a function of $X$ and $U$ is a utility function relevant for $D$''.
However, $U$ being a function of $X$ only proves that $U$ is conditionally dependent on $X$,
not that it changes the expected utility, or is requisite or material.
Additional argumentation is needed to show that conditioning on $X$ can actually change the expected utility;
our proof provides such an argument.\looseness=-1
}\footnote{Since a preprint of this paper was placed online \citep{everitt2019understanding},
this completeness result was independently discovered by
\citet[Thm. 2]{zhang2020causal} and \citet[Thm. 1]{lee2020characterizing}.
Theorem 2 in the latter also provides a criterion for material observations in a multi-decision
setting.
}

To have positive VoC, it is known that a node must be an ancestor of a value node \citep{shachter1986evaluating},
but the authors know of no more-specific criterion.
The concept of a \emph{relevant} node introduced by
\citet{nielsen1999welldefined} also bears some semblance to VoC.

The relation of the current technical results to prior work is summarised in
Table S1 in the Appendix.

\paragraph{Instrumental control incentives}
\citet{Kleiman-Weiner2015} use (causal) influence diagrams to define a notion
of \emph{intention}, that captures which nodes an optimal policy seeks to
influence.
Intention is conceptually similar to instrumental control incentives and uses
hypothetical node deletions to ask which nodes the agent intends to control.
Their concept is more refined than ICI in the sense that it includes includes
only the nodes that determine optimal policy behaviour, but the definition is
not properly formalized and it is not clear that it can be applied to all
influence diagram structures.

\paragraph{AI fairness}
Another application of this work is to evaluate when an AI system is incentivised to behave unfairly,
on some definition of fairness.
Response incentives address this question for counterfactual fairness \citep{kusner2017counterfactual,kilbertus2017discrimination}.
An incentive criterion corresponding to path-specific effects \citep{zhang2016causal,Nabi2018} is deferred to future work.
\citet{nabi2019learning} have shown how a policy may be chosen subject to path-specific effect constraints.
However, they assume recall of all past events, whereas the response incentive criterion applies to any CID.~\looseness=-1

\paragraph{Mechanism design}
The aim of mechanism design is to understand how objectives and environments can
be designed, in order to shape the behavior of rational agents (e.g.\ \citealp[Part II]{Nisan2007}).
At this high level, mechanism design is closely related to the incentive
design results we have developed in this paper. In practice, the strands of research look rather different.
The core challenge of mechanism design is that agents have private
information or preferences. As we take the perspective of an agent designer, private information is only
relevant for us to the extent that some types of agents or objectives may be
harder to implement than others.
Instead, our core challenge comes from causal relationships in agent
environments, a consideration of little interest to most of mechanism design.

 \section{Discussion and Conclusion}\label{sec:conclusion}
We have proved sound and complete graphical criteria for two existing concepts (VoI and VoC) and
two new concepts: response incentive and instrumental control incentive.
The results have all focused on the (causal) structure of the interaction between
agent and environment.
This is both a strength and a weakness.
On the one hand, it means that formal conclusions can be made about a system's
incentives, even when details about the quantitative relationship
between variables is unknown.
On the other hand, it also means that 
these results will not help with subtler comparisons, such as the relative
strength of different incentives.
It also means that the causal relationships between variables must be
known.
This challenge is common to causal models in general. 
In the context of incentive design, it is partially alleviated by the fact 
that causal relationships often follow directly from the design choices for 
an agent and its objective.
Finally, causal diagrams struggle to express dynamically changing causal
relationships.

While important to be aware of, these limitations do not prevent causal
influence diagrams from providing a clear, useful, and unified perspective on
agent incentives. It has seen applications ranging from
value learning \citep{Armstrong2020pitfalls,Holtman2020},
interruptibility \citep{langlois2021rl}, 
conservatism \citep{cohen2020unambitious},
modeling of agent frameworks \citep{Everitt2019modeling},
and reward tampering \citep{everitt2019tampering}.
Through such applications, we hope that the incentive analysis described in this paper
will ultimately contribute to more fair and safe AI systems.

 \ifblind\else
\section{Acknowledgements}
Thanks to
James Fox,
Lewis Hammond,
Michael Cohen,
Ramana Kumar,
Chris van Merwijk,
Carolyn Ashurst,
Michiel Bakker,
Silvia Chiappa, and
Koen Holtman
for their invaluable feedback.

This work was supported by the Leverhulme Centre for the Future of Intelligence, Leverhulme Trust, under Grant RC-2015-067.

We acknowledge the support of the Natural Sciences and Engineering Research
Council of Canada (NSERC), [funding reference number CGSD3-534795-2019].

Cette recherche a été financée par le Conseil de recherches en sciences
naturelles et en génie du Canada (CRSNG), [numéro de référence CGSD3-534795-2019].
\fi

\bibliography{tex/main}
\newcommand*{\cellcite}[1]{\citeauthor{#1}~\citeyear{#1}}
\rowcolors{2}{gray!20}{white}
\begin{table*}[!th]
    \renewcommand\thetable{S1}
    \centering

\begin{tabular}{p{0.8cm} L{2.5cm} L{3.5cm} L{3.5cm} L{5.3cm}}
& Definition & Criterion & Soundness & Completeness \\
\midrule
VoI &
\cellcite{Howard1966};
\cellcite{Matheson1990} &
\cellcite{fagiuoli1998note};
\cellcite{Lauritzen2001};
\cellcite{Shachter2016} &
\cellcite{fagiuoli1998note};
\cellcite{Lauritzen2001};
\cellcite{Shachter2016} &
First correct proof to our knowledge (see \nameref{sec:related-work})
\\
VoC &
\cellcite{shachter1986evaluating};
\cellcite{Matheson1990};
\cellcite{Shachter2010} &
Incomplete version by \citet{shachter1986evaluating}
\linebreak (see \nameref{sec:related-work}) &
New; proved using do-calculus and VoI &
New; proved constructively
(cf. ``relevant utility nodes'' \citet{nielsen1999welldefined}) \\
RI & New & New & New; proved using do-calculus and VoI & New; proved constructively \\
ICI & New & New & New; proved using do-calculus & New; proved constructively \\
\end{tabular}
\caption{
  Comparison with related work.
  The concepts of positive value of information (VoI), and positive value of
  control (VoC) are well-known. 
    For VoI, a new, corrected, proof is provided.
    For VoC, the present work offers a new criterion, proving it sound and complete.
  For response incentive (RI) and instrumental control incentive (ICI), 
  the criterion and all proofs are new.
}\label{table:prior-work}
\end{table*}
 
\newpage{}
\setcounter{secnumdepth}{2} \appendix

\section{Causality Examples}
\label{app:causality-examples}

\begin{figure}
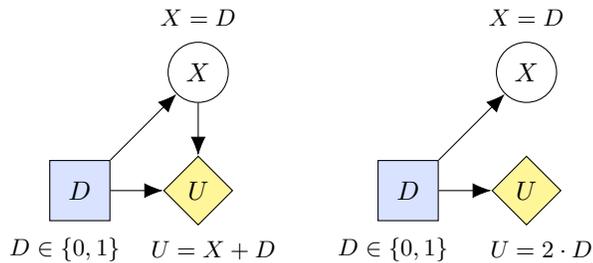

  \centering
  \begin{subfigure}{0.48\linewidth}
    \begin{influence-diagram}
      \node (D) [decision] {$D$};
      \node (U) [right = of D] [utility] {$U$};
      \node (X) [above = of U] {$X$};

      \edge {D} {X};
      \edge {D, X} {U};

    \begin{scope}[
      node distance = 1mm,
      every node/.style = {rectangle, draw=none}]
      \small
        \node [above = of X] {$X=D$};
        \node [below = of U, xshift=2mm] {$U=X+D$};
        \node [below = of D, xshift=-2mm] {$D\in\{0,1\}$};
      \end{scope}
    \end{influence-diagram}
    \caption{A causal influence diagram reflecting the causal structure of the
      environment}
    \label{fig:causality-example-a}
  \end{subfigure}
  \hspace{1mm}
    \begin{subfigure}{0.48\linewidth}
    \begin{influence-diagram}
      \node (D) [decision] {$D$};
      \node (U) [right = of D] [utility] {$U$};
      \node (X) [above = of U] {$X$};

      \edge {D} {X};
      \edge {D} {U};
    \begin{scope}[
      node distance = 1mm,
      every node/.style = {rectangle, draw=none}]
      \small
        \node [above = of X] {$X=D$};
        \node [below = of U, xshift=2mm] {$U=2\cdot D$};
        \node [below = of D, xshift=-2mm] {$D\in\{0,1\}$};
      \end{scope}
    \end{influence-diagram}
    \caption{Influence diagram that is causal in the sense of
      \citet{Heckerman1994,Heckerman1995}}
    \label{fig:causality-example-b}
  \end{subfigure}
  \caption{Two different influence diagram representations of the same
    situation, with different VoC and ICI.}
  \label{fig:causality-example}
\end{figure}

\begin{figure}
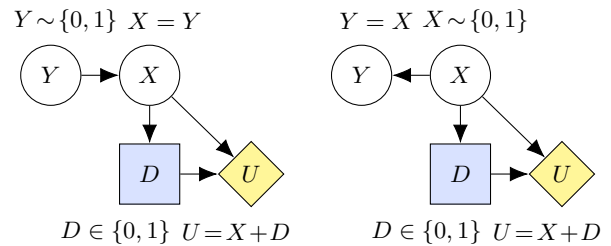

  \centering
  \begin{subfigure}{0.475\linewidth}
    \begin{influence-diagram}
      \small
      \node (D) [decision] {$D$};
      \node (X) [above = 5mm of D] {$X$};
      \node (Y) [left = 5mm of X] {$Y$};
      \node (U) [right = 5mm of D] [utility] {$U$};

      \edge {Y} {X};
      \edge {X} {D};
      \edge {D, X} {U};

      \begin{scope}[
          node distance = 1mm,
          every node/.style = {rectangle, draw=none}]
        \small
        \node [above = of Y, xshift=2mm, yshift=-0.6mm] {$Y\!\sim\! \{0, 1\}$};
        \node [above = of X, xshift=2mm] {$X=Y$};
        \node [below = 1mm of U, xshift=-2mm] {$U\!=\!X\!+\!D$};
        \node [below = 1mm of D, xshift=-4.5mm] {$D\in\{0,1\}$};
      \end{scope}
    \end{influence-diagram}
    \caption{A causal influence diagram reflecting the causal structure of the
    environment}\label{fig:causality-example-ri-a}
  \end{subfigure}
  \hspace{1mm}
  \begin{subfigure}{0.475\linewidth}
    \begin{influence-diagram}
      \small
      \node (D) [decision] {$D$};
      \node (X) [above = 5mm of D] {$X$};
      \node (Y) [left = 5mm of X] {$Y$};
      \node (U) [right = 5mm of D] [utility] {$U$};

      \edge {X} {Y};
      \edge {X} {D};
      \edge {D, X} {U};

      \begin{scope}[
          node distance = 1mm,
          every node/.style = {rectangle, draw=none}]
        \small
        \node [above = of X, xshift=2mm, yshift=-0.6mm] {$X\!\sim\! \{0, 1\}$};
        \node [above = of Y, xshift=2mm] {$Y=X$};
        \node [below = of U, xshift=-2mm] {$U\!=\!X\!+\!D$};
        \node [below = of D, xshift=-4.5mm] {$D\in\{0,1\}$};
      \end{scope}
    \end{influence-diagram}
    \caption{Influence diagram that is causal in the sense of
    \citet{Heckerman1994,Heckerman1995}}\label{fig:causality-example-ri-b}
  \end{subfigure}
  \caption{Two different influence diagram representations of the same
    situation, with different RI and VoC.
    In \cref{fig:causality-example-ri-a},
    $Y$ is sampled from some arbitrary distribution on $\{0, 1\}$, for example a
    Bernoulli distribution with $p=0.5$.
    In \cref{fig:causality-example-ri-b}, $X$ is sampled in the same way.
}
  \label{fig:causality-example-ri}
\end{figure}

Causal influence diagrams that reflect the full causal structure of
the environment are needed to correctly capture response incentives,
value of control and instrumental control incentives.
We begin with showing this for instrumental control incentives and value of control,
leaving response incentive to the end of this section.
Consider the two influence diagrams in \cref{fig:causality-example}.
If we assume that $X$ really affects $U$, only the diagram in
\cref{fig:causality-example-a} correctly represents this causal structure, whereas
\cref{fig:causality-example-b} lacks the edge $X\to U$.
According to \cref{def:instrumental-goal,def:control-incentive-sa}, $X$ has
positive value of control and an instrumental control incentive.
Only \cref{fig:causality-example-a} gets this right.

The influence diagram literature has discussed weaker notions of causality,
under which \cref{fig:causality-example-b} is considered a valid alternative
representation of the situation described by \cref{fig:causality-example-a}.
For example, if we only consider their joint distributions conditional on
various policies, then
\cref{fig:causality-example-a,fig:causality-example-b}
are identical.
Both diagrams are also in the canonical form of
\citet{Heckerman1995}, as every variable responsive to the
decision is a descendant of the decision.
For the same reason,
both diagrams are also causal influence diagrams in the terminology of
\citet{Heckerman1994} and \citet{Shachter2010}.
Since only \cref{fig:causality-example-a} gets the incentives right, we see that
the stronger notion of causal influence diagram introduced in this paper is
necessary to correctly model instrumental control incentives and value of control.

To show that response incentives also rely on fully causal influence diagrams,
consider the diagrams in \cref{fig:causality-example-ri}.
Again, we assume that \cref{fig:causality-example-ri-a}
accurately depicts the environment, while
\cref{fig:causality-example-ri-b} has the edge $Y\to X$ reversed.
Again, both diagrams have identical joint distributions
given any policy.
Both diagrams are also causal in the weaker sense of
\citet{Heckerman1994} and \citet{Shachter2010}.
Yet only the fully causal influence diagram in \cref{fig:causality-example-ri-a}
exhibits that $Y$ can have a response incentive or positive value of control.

\pagebreak
\section{Proof Preliminaries}
\label{sec:appendix_prelims}

Our proofs will rely on the following fundamental results about causal models from
\citep{galles1997axioms} and \citep{Pearl2009}.~\looseness=-1

\begin{definition}[Causal Irrelevance]\label{def:causal-irrelevance}
$\sX$ is \emph{causally irrelevant} to $\sY$, given $\sZ$, written
$(\sX \irrelevant \sY | \sZ)$ if, for every set $\sW$ disjoint of
$\sX \cup \sY \cup \sZ$, we have
\begin{align*}
  \forall \eps, \sz, \sx, \sx', \sw \qquad
  \sY_{\sx\sz\sw}(\eps) &= \sY_{\sx'\sz\sw}(\eps)
\end{align*}
\end{definition}
 \begin{lemma}\label{theorem:path-causal-irrelevance}
For every SCM $\causalmodel$ compatible with a DAG $\causalgraph$,
\begin{align*}
  {(\sX \nopathto \sY | \sZ)}_\causalgraph \Rightarrow
  (\sX \irrelevant \sY | \sZ)
\end{align*}
\end{lemma}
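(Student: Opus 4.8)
The plan is to unfold \cref{def:causal-irrelevance} directly, using the recursive evaluation of submodels from \cref{def:submodel}, and to observe that holding $\sZ$ fixed severs every directed route from $\sX$ to $\sY$. Concretely, fix an arbitrary set $\sW$ disjoint from $\sX \cup \sY \cup \sZ$ and arbitrary values $\eps$, $\sz$, $\sw$, $\sx$, $\sx'$ (and take $\sX,\sY,\sZ$ pairwise disjoint, as in the definition of causal relevance); the goal is to show $\sY_{\sx\sz\sw}(\eps) = \sY_{\sx'\sz\sw}(\eps)$. Both potential responses are evaluated in a submodel of $\causalmodel$ produced by $\Do(\sX{=}\sx,\sZ{=}\sz,\sW{=}\sw)$ (respectively with $\sx'$), whose induced graph $\causalgraph'$ is $\causalgraph$ with all edges \emph{into} $\sX \cup \sZ \cup \sW$ removed.

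The first step is graph-theoretic: $\causalgraph'$ contains no directed path from any node of $\sX$ to any node of $\sY$. Such a path would also be a directed path in $\causalgraph$, and since $\causalgraph'$ deletes every in-edge of $\sZ$, it cannot visit any node of $\sZ$; but $(\sX \nopathto \sY \mid \sZ)_\causalgraph$ says exactly that every directed $\sX$-to-$\sY$ path in $\causalgraph$ passes through $\sZ$ --- a contradiction. (Deleting the extra in-edges of $\sW$ only removes further paths, so the conclusion does not depend on the choice of $\sW$.) Hence no node of $\sY$ is a descendant of any node of $\sX$ in $\causalgraph'$.

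The second step is a structural induction along a topological ordering of $\causalgraph'$: for every variable $V$ that is \emph{not} a descendant of $\sX$ in $\causalgraph'$, the value $V_{\sx\sz\sw}(\eps)$ does not depend on $\sx$. If $V \in \sX$ there is nothing to prove, since $V$ is among its own descendants. If $V \in \sZ \cup \sW$, the value is the corresponding constant coordinate of $\sz$ or $\sw$, hence $\sx$-independent. Otherwise the submodel retains the original function $\fv{V}$, evaluated on $\exovarv{V}$ and the parents of $V$ in $\causalgraph'$; none of those parents is a descendant of $\sX$ (else $V$ would be, via the edge into $V$), so by the inductive hypothesis their values are $\sx$-independent, and therefore so is $V_{\sx\sz\sw}(\eps)$. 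Applying this to each node of $\sY$ --- all non-descendants of $\sX$ in $\causalgraph'$ by the first step --- yields $\sY_{\sx\sz\sw}(\eps) = \sY_{\sx'\sz\sw}(\eps)$. Since $\sW,\eps,\sz,\sw,\sx,\sx'$ were arbitrary, this is precisely $(\sX \irrelevant \sY \mid \sZ)$.

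I expect the only genuine care needed is bookkeeping: pinning down the intended reading of $(\sX \nopathto \sY \mid \sZ)_\causalgraph$ (every directed path meets $\sZ$, equivalently $\sX$ and $\sY$ are directed-path-disconnected in $\causalgraph$ after deleting $\sZ$), handling the three simultaneous interventions without mishandling overlaps with $\sY$, and stating the induction hypothesis over exactly the non-descendants of $\sX$ in the mutilated graph so that it propagates cleanly. No appeal to $d$-separation or do-calculus is required --- this is a direct consequence of the structural-equation semantics of submodels.
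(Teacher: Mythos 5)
Your proof is correct and is essentially the paper's own argument: the paper simply writes ``By induction over variables, as in Lemma~12 of \citet{galles1997axioms}'', and your structural induction along a topological order of the mutilated graph is exactly that induction carried out explicitly. The side remarks you flag (pairwise disjointness of $\sX,\sY,\sZ$ and the reading of $(\sX \nopathto \sY \mid \sZ)$ as ``every directed path meets $\sZ$'') match the intended conventions, so nothing further is needed.
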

 \begin{proof*}
By induction over variables, as in \cite[Lemma~12]{galles1997axioms}.
\end{proof*}

\begin{lemma}[{\citealp[Thm. 3.4.1, Rule 1]{Pearl2009}}]\label{theorem:do-calc-insertion-of-obs}
For any disjoint subsets of variables $\sW, \sX, \sY, \sZ$ in the DAG
$\causalgraph$,
$\EE(\sY_{\sx} | \sz, \sw) = \EE(\sY_{\sx} | \sw)$
if ${\sY \dsepag{\causalgraph'} \sZ | (\sX, \sW)}$ in the graph $\causalgraph'$
formed by deleting all incoming edges to $\sX$.
\end{lemma}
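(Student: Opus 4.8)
The statement is the expectation-valued form of Rule~1 of the do-calculus \citep{Pearl2009}, so the plan is to first reduce it to the probability-valued version and then prove that directly from the Markov property of structural causal models. For the reduction: fixing a context with $\Pr(\sZ_\sx = \sz,\, \sW_\sx = \sw) > 0$ (otherwise the conditional expectation is undefined and there is nothing to prove), I would use $\EE(\sY_\sx \mid \sz, \sw) = \sum_{\sy \in \dom(\sY)} \sy \cdot \Pr(\sY_\sx = \sy \mid \sz, \sw)$ (componentwise if $\sY$ is a set), so it suffices to establish $\Pr(\sY_\sx = \sy \mid \sz, \sw) = \Pr(\sY_\sx = \sy \mid \sw)$ for every $\sy$.

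For the probability version, recall from \cref{def:submodel} that the joint distribution of the potential responses $\sY_\sx, \sZ_\sx, \sW_\sx$ is exactly the joint distribution induced by the submodel $\scm_\sx$. The structural functions of $\scm_\sx$ coincide with those of $\scm$ except that each $X \in \sX$ is assigned a constant function, so the causal DAG of $\scm_\sx$ is precisely $\causalgraph'$ --- the graph obtained from $\causalgraph$ by deleting every edge into $\sX$, with the members of $\sX$ now parentless roots. Since $\scm_\sx$ is again an SCM with mutually independent exogenous errors, the distribution it induces is Markov relative to $\causalgraph'$; hence, by soundness of d-separation \citep{Verma1988soundness}, the hypothesis $(\sY \dsepag{\causalgraph'} \sZ \mid (\sX, \sW))$ yields the conditional independence $\sY_\sx \perp \sZ_\sx \mid (\sX_\sx, \sW_\sx)$ under $\scm_\sx$. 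Because every variable in $\sX$ is deterministically equal to its assigned value in $\scm_\sx$, conditioning on $\sX_\sx$ adds no information, so this independence reduces to $\sY_\sx \perp \sZ_\sx \mid \sW_\sx$, which is exactly $\Pr(\sY_\sx = \sy \mid \sz, \sw) = \Pr(\sY_\sx = \sy \mid \sw)$. Combined with the reduction above, this gives the claim.

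The only delicate point --- and the step I would be most careful with --- is the bookkeeping around $\sX$: one must verify that the submodel's DAG is literally $\causalgraph'$ with the intervened nodes as roots, and that dropping the now-deterministic set $\sX$ from the conditioning set genuinely preserves the independence. One also relies on the standard (but worth stating) fact that an SCM with mutually independent errors induces a distribution that is Markov relative to its DAG, which is what licenses the appeal to soundness of d-separation. Everything else is a routine application of linearity of expectation.
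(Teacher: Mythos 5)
The paper does not prove this lemma at all --- it is stated in the ``Proof Preliminaries'' section purely as an imported result, cited to \citet[Thm.~3.4.1, Rule~1]{Pearl2009}. Your reconstruction is the standard argument for Rule~1 and is correct: the submodel's dependency graph is exactly the mutilated graph, the induced distribution is Markov relative to it, soundness of d-separation gives the conditional independence, and the intervened variables can be dropped from the conditioning set because they are almost surely constant in the submodel. The two delicate points you flag are handled correctly, so there is nothing to compare against and nothing to fix.
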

 \begin{lemma}[{\citealp[Thm. 1.2.4]{Pearl2009}}]\label{theorem:d-separation-independence}
For any three disjoint subsets of nodes $(\sX, \sY, \sZ)$ in a DAG
$\causalgraph$, $(\sX \dsep_\causalgraph \sY | \sZ)$ if and only if
${(\sX \indep \sY | \sZ)}_P$ for every probability function $P$ compatible with $\causalgraph$.
\end{lemma}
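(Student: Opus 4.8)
The plan is to establish the two directions of the equivalence separately, using only the definition of compatibility --- $P(\evars)=\prod_{V\in\evars}P(V\mid\Pav{V})$ --- together with the combinatorial definition of d-separation (\cref{def:d-separation}).

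\textbf{Soundness ($\Rightarrow$).} I would route this through undirected graphs. Put $\sA=\mathrm{An}(\sX\cup\sY\cup\sZ)$, the union of the three sets with all of their ancestors, and let $H$ be the \emph{moral graph} of the induced subgraph $\causalgraph_{\sA}$: join every pair of nodes that have a common child in $\causalgraph_{\sA}$, then drop orientations. Two standard facts then suffice. First, marginalising any compatible $P$ onto $\sA$ yields a distribution that factorises over the cliques of $H$: argue by induction on $\lvert\evars\setminus\sA\rvert$, repeatedly summing out a vertex of $\evars\setminus\sA$ that has no descendants in $\evars\setminus\sA$ --- such a vertex is a genuine sink, since any child of it would lie in the ancestrally closed set $\sA$ --- so that each conditional distribution $P(V\mid\Pav{V})$ sums to one over the values of $V$ and its factor simply disappears, leaving $P_{\sA}=\prod_{V\in\sA}P(V\mid\Pav{V})$, in which every factor is supported on the clique $\{V\}\cup\Pav{V}$ of $H$. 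Second, if $\sZ$ separates $\sX$ from $\sY$ in $H$ then $P(\sX,\sY\mid\sZ)=P(\sX\mid\sZ)\,P(\sY\mid\sZ)$, because no clique of $H$ meets both the $\sX$-side and the $\sY$-side of the separator, so after conditioning on $\sZ$ the clique potentials partition into an $\sX$-side product and a $\sY$-side product. It remains to record the purely graph-theoretic equivalence $(\sX\dsep_{\causalgraph}\sY\mid\sZ)\iff$ ``$\sZ$ separates $\sX$ from $\sY$ in $H$'': a path in $\causalgraph$ active given $\sZ$ becomes, after rerouting each collider through one of its $\sZ$-descendants and moralising, a $\sZ$-avoiding path in $H$, and the converse is similar. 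Chaining the three steps gives $(\sX\indep\sY\mid\sZ)_{P}$ for every compatible $P$.

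\textbf{Completeness ($\Leftarrow$), by contraposition.} Suppose $\sX$ and $\sY$ are \emph{not} d-separated by $\sZ$. Fix $X\in\sX$, $Y\in\sY$ and a path $p=(X=V_0,V_1,\dots,V_k=Y)$ that is active given $\sZ$ and, without loss of generality, visits no vertex twice. I would exhibit a single compatible $P$ under which $X\not\indep Y\mid\sZ$ by taking all variables binary: give each node $V$ an independent $\mathrm{Bernoulli}(1/2)$ noise bit $\eps_V$; a node playing no role below is set to $V=\eps_V$, ignoring its parents; a non-collider $V_i$ of $p$ deterministically copies its path-parent; a collider $V_i$ of $p$ sets $V_i=V_{i-1}\oplus V_{i+1}$; and every collider of $p$ that is not itself in $\sZ$ is tied to one of its descendants in $\sZ$ (which exists because $p$ is active) by a chain of deterministic copies along an existing directed path. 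Every structural equation reads only parents, so $P$ is compatible with $\causalgraph$. Choosing a value $\bm z$ of $\sZ$ that fixes all the collider-witnesses consistently, conditioning on $\sZ=\bm z$ collapses $p$ into a chain of $\oplus$-relations forcing $X=Y\oplus c(\bm z)$ for a constant $c(\bm z)$, so $\operatorname{Cov}(X,Y\mid\sZ=\bm z)=\pm\tfrac14\neq0$ and hence $X\not\indep Y\mid\sZ$ under $P$.

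\textbf{Main obstacle.} The delicate part is the completeness construction: one must ensure that the deterministic equations used to keep $p$ active do not silently cancel --- that $p$ can be taken simple, that the $\oplus$-signs and the chosen $\bm z$ are mutually coherent all along $p$ and its collider-to-$\sZ$ chains (including when two such chains share a node, which merely imposes an extra equality and does not disconnect $p$), and that the remaining non-$p$ parents of path nodes, each carrying only its own independent noise, cannot re-block the dependence once further coordinates of $\sZ$ are conditioned on. Restricting determinism to vertices lying on $p$ or on a collider chain keeps those interferences out, and a short induction along $p$ then evaluates the conditional covariance. Alternatively, one may appeal to the genericity of faithful distributions compatible with $\causalgraph$, which furnishes a single $P$ witnessing every d-connection of $\causalgraph$ at once.
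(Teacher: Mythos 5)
The paper does not prove this lemma at all: it is imported verbatim as \citet[Thm.~1.2.4]{Pearl2009} in the ``Proof Preliminaries'' section, so there is no in-paper argument to compare yours against. What you have written is a reconstruction of the classical literature proof, and in outline it is the right one. Your soundness direction (restrict to the ancestral closure of $\sX\cup\sY\cup\sZ$, moralise, observe that the marginal factorises over the cliques of the moral graph, and use the equivalence of d-separation with vertex separation in that moral graph) is exactly the Lauritzen--Dawid--Larsen--Leimer argument and is sound; the only point worth making explicit is that you need factorisation~$\Rightarrow$~global Markov for undirected graphs \emph{without} positivity, which does hold in the direction you use it. Your completeness direction is the standard Geiger--Pearl-style XOR construction and the conclusion is correct.

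The one place where your write-up has real technical debt is the part you yourself flag: the deterministic copy-chains from colliders to their $\sZ$-descendants may intersect one another, or intersect the active path $p$ itself, in which case a node is assigned two incompatible structural equations. Saying this ``merely imposes an extra equality'' does not discharge it, because the construction as stated is then ill-defined. The standard repair is path surgery --- if two witness chains (or a chain and $p$) meet, reroute through the intersection point to obtain a new active path with strictly fewer colliders, and recurse; this terminates and yields mutually disjoint chains. Notably, this is precisely the device the paper itself deploys in its response-incentive completeness construction (\cref{theorem:ri-graph-criterion-completeness}, the ``mutually non-intersecting'' bullet), so you could either spell it out analogously or, as you note, sidestep the whole construction by citing the genericity/faithfulness result of Meek. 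Also be slightly more careful with non-collider path nodes: a fork node has no ``path-parent'' and must be a fresh noise bit, and the endpoints $X,Y$ cannot lie in $\sZ$ by clause~3 of \cref{def:d-separation}, which is what guarantees the conditional covariance you compute is $\pm\tfrac14$ rather than degenerate. With those repairs the argument is complete.
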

 \begin{lemma}[{\citealp[Sigma Calculus Rule 3]{correacalculus}}]\label{le:sigma-calculus-intervention}
For any disjoint subsets of nodes $(\sX, \sY) \subseteq \sV$ and $\sZ \subseteq \sV$ in a DAG $\causalgraph$
$\Pr(\sX | \sZ;g^Y) = \Pr(\sX|\sZ;g'^Y)$
if $\sX \dsep \sY | \sZ$ in $\causalgraph_{\overline{\sY(\sZ)}}$
where $\sY(\sZ) \subseteq \sY$ is the set of elements in $\sY$ that are not ancestors of $\sZ$ in $\causalgraph$
and $\causalgraph_{\overline{\sW}}$ denotes $\causalgraph$ but with edges incoming to variables in $\sW$ removed.\end{lemma}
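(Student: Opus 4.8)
The plan is, in fact, essentially to cite it: the statement is Rule~3 of the $\sigma$-calculus of \citet{correacalculus}, so no new argument is really needed in the paper. For a self-contained sketch, the approach I would take is to turn the choice of mechanism at $\sY$ into an ordinary observable variable and then appeal to the soundness of d-separation (\cref{theorem:d-separation-independence}).

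First I would form an augmented DAG $\causalgraph^{+}$ from $\causalgraph$ by adjoining, for each $Y \in \sY$, a fresh ``regime'' node $R_Y$ with the single edge $R_Y \to Y$, and extend the model so that $Y$ is computed from its usual parents and exogenous variable by $g^{\sY}$ when $R_Y$ is in one state and by $g'^{\sY}$ when it is in the other. By construction $\Pr(\sX \mid \sZ; g^{\sY}) = \Pr(\sX \mid \sZ, R_{\sY} = g)$ and $\Pr(\sX \mid \sZ; g'^{\sY}) = \Pr(\sX \mid \sZ, R_{\sY} = g')$ in the augmented model, where $R_{\sY}$ is the tuple of regime nodes; so the claim reduces to $\sX \indep R_{\sY} \mid \sZ$, which by \cref{theorem:d-separation-independence} holds once we show $(\sX \dsep R_{\sY} \mid \sZ)$ in $\causalgraph^{+}$. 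The second step is to derive that d-separation from the hypothesis $\sX \dsep \sY \mid \sZ$ in $\causalgraph_{\overline{\sY(\sZ)}}$: any path between $R_{\sY}$ and $\sX$ in $\causalgraph^{+}$ leaves a regime node along its unique edge, hence has the form $R_Y \to Y \cdots X$ for some $Y \in \sY$ and $X \in \sX$, and a path-surgery argument shows that if such a path is active given $\sZ$ then, after discarding the regime prefix and trimming where necessary, one obtains a $\sZ$-active path from $\sY$ to $\sX$ that still exists in $\causalgraph_{\overline{\sY(\sZ)}}$ --- contradicting the hypothesis.

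The delicate point --- and exactly the reason the statement conditions on $\sY(\sZ)$, mirroring Rule~3 of the ordinary do-calculus --- is the treatment of colliders that sit at an $\sY$-node: a collider at $Y \in \sY$ is opened by $\sZ$ only when $Y$ has a descendant in $\sZ$, i.e.\ when $Y$ is an ancestor of $\sZ$ and therefore $Y \notin \sY(\sZ)$, so the incoming edges of $Y$ needed by that portion of the path survive the mutilation; conversely, for $Y \in \sY(\sZ)$ such a collider is always blocked, so deleting its incoming edges is harmless. Checking that these case distinctions fit together --- including that the descendant-of-$\sZ$ relations used to open colliders are themselves preserved under the mutilation --- is the main obstacle; since \citet{correacalculus} already carry this out, in the paper I would simply cite that rule rather than reproduce the proof.
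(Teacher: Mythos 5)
The paper does not prove this lemma at all---it appears in the proof preliminaries purely as an imported result of \citet{correacalculus}, so your plan to simply cite Rule~3 of the $\sigma$-calculus matches the paper exactly. Your supplementary regime-node sketch is the standard argument for such rules and correctly isolates the collider-at-$\sY$ subtlety that motivates the $\sY(\sZ)$ mutilation, but none of it is needed to match what the paper does.
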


\section{Proofs}\label{sec:proofs}

\subsection{Value of Information Criterion}\label{sec:appendix-voi}
\newcommand*{\Padg}{\Pad_\causalgraph}
\newcommand*{\padg}{\pad_\causalgraph}

\newcommand*{\Padr}{\ensuremath{\Pad_\text{min}}}
\newcommand*{\padr}{\ensuremath{\pad_\text{min}}}
\newcommand*{\Padnr}{\ensuremath{\Pad_-}}
\newcommand*{\padnr}{\ensuremath{\pad_-}}
\newcommand*{\tpadnr}{\ensuremath{\varidx{\tilde{\pa}}{\decisionvar}_-}}

\newcommand*{\utildvars}{\utilvarsd}
\newcommand*{\utilndvars}{\utilvarsv{\setminus\decisionvar}}
\newcommand*{\tudvar}{\varidx{\totutilvar}{\decisionvar}}
\newcommand*{\tundvar}{\varidx{\totutilvar}{\setminus\decisionvar}}

First, we introduce the notion of a $\rcid$-respecting optimal policy. Our proof of its
optimality is similar to Theorem 3 from \citep{Lauritzen2001}.
It builds on the following intersection property of d-separation.
\begin{lemma}[d-separation \emph{intersection} property]
  \label{le:d-sep-intersection-property}
  For all disjoint sets of variables $\sW$, $\sX$, $\sY$, and $\sZ$,
  \begin{equation*}
    (\sW \dsepg \sX | \sY, \sZ) \land
    (\sW \dsepg \sY | \sX, \sZ) \Rightarrow
    (\sW \dsepg (\sX \cup \sY) | \sZ)
  \end{equation*}
\end{lemma}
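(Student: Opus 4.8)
The plan is to argue purely graphically, by contradiction, working directly with the path-based definition of d-separation in \cref{def:d-separation}. One should resist the temptation to route through \cref{theorem:d-separation-independence} together with the graphoid ``intersection'' axiom for conditional independence: that axiom requires strictly positive distributions, whereas the statement here is unconditional, so such a reduction would be circular. So suppose the conclusion fails, i.e.\ $\sW$ is d-connected to $\sX \cup \sY$ given $\sZ$. Then some path from a node of $\sW$ to a node of $\sX \cup \sY$ is not d-separated by $\sZ$; among all such paths I would fix one, $p$, of minimal length, with endpoints $W \in \sW$ and $V \in \sX \cup \sY$. Since $\sW, \sX, \sY, \sZ$ are pairwise disjoint, both endpoints lie outside $\sY \cup \sZ$ and outside $\sX \cup \sZ$, and $p$ has length at least one.

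The first real step is to show that no \emph{interior} node of $p$ lies in $\sX \cup \sY$. Suppose some interior node $V_i$ did; by disjointness $V_i \notin \sZ$, so, whether $V_i$ is a collider or a non-collider on $p$, the sub-path $p'$ running from $W$ to $V_i$ is again not d-separated by $\sZ$: its interior nodes are interior nodes of $p$ and keep their collider/non-collider status, so clauses~1 and~2 of \cref{def:d-separation} are inherited, clause~3 fails because $W, V_i \notin \sZ$, and the requirement previously imposed at $V_i$ is dropped once $V_i$ becomes an endpoint of $p'$. But $p'$ is strictly shorter than $p$, contradicting minimality. Hence $p$ meets $\sX \cup \sY$ only at the endpoint $V$.

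Now, without loss of generality take $V \in \sX$ (the case $V \in \sY$ is symmetric, with the roles of the two hypotheses swapped). I claim $p$ is also not d-separated by $\sY \cup \sZ$, which contradicts $(\sW \dsepg \sX \mid \sY, \sZ)$ and finishes the proof. To verify this I would run through the three clauses of \cref{def:d-separation} with separating set $\sY \cup \sZ$: clause~3 does not apply because $W, V \notin \sY \cup \sZ$; clause~1 does not apply because every collider on $p$ already has itself or a descendant in $\sZ \subseteq \sY \cup \sZ$ (since $p$ is not d-separated by $\sZ$); and clause~2 does not apply because every non-collider $V_i$ on $p$ satisfies $V_i \notin \sZ$ and $V_i \notin \sY$ (the latter by the previous step), hence $V_i \notin \sY \cup \sZ$. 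Thus $p$ d-connects $\sW$ and $\sX$ given $\sY \cup \sZ$, a contradiction. The one subtle point I anticipate is the minimality argument of the previous paragraph; once it is in place, the rest rides on the simple monotonicity fact that enlarging the separating set from $\sZ$ to $\sY \cup \sZ$ can only \emph{open} colliders on $p$ and can never block a non-collider that already avoids $\sY$, so d-connection is preserved.
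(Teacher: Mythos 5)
Your proof is correct and takes essentially the same route as the paper's: argue by contraposition, truncate an active path at its first meeting with $\sX \cup \sY$ (your minimal-length choice achieves exactly this), and observe that the truncated path remains active when the conditioning set is enlarged from $\sZ$ to $\sY \cup \sZ$ (or $\sX \cup \sZ$), since enlarging the set can only keep colliders open and, by the interior-avoidance step, blocks no non-collider. Your write-up is more careful than the paper's brief sketch---notably the explicit three-clause check and the warning against reducing to the positivity-dependent graphoid intersection axiom---but the underlying argument is the same.
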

 \begin{proof}
Suppose that the RHS is false, so there is a path from $\sW$ to $\sX \cup
\sY$ conditional on $\sZ$.
This path must have a sub-path that passes from $\sW$ to $X \in \sX$ without
passing through $\sY$ or to $Y \in \sY$ without passing through $\sX$ (it must
traverse one set first).
But this implies that $\sW$ is d-connected to $\sX$ given $\sY,\sZ$ or to $\sY$
given $\sX,\sZ$, meaning the LHS is false.
So if the LHS is true, then the RHS must be true.
\end{proof}

\begin{lemma}[$\rcid$-respecting optimal policy] \label{le:reduced-optimal-policy}
    Every single-decision SCIM $\scim = \scimdef$ has an optimal policy
    $\tilde\pi$ that depends only on requisite observations.
    In other words, $\tilde\pi$ is also a policy for the minimal model
    $\rscim = {\left\langle \rcid, \exovars, \structfns, \exoprob \right\rangle}$.
    We call $\tilde\pi$ a \emph{$\rcid$-respecting optimal policy}.
\end{lemma}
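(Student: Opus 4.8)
The plan is to show that, starting from any optimal policy, we can discard its dependence on nonrequisite observations one at a time without decreasing expected utility, eventually obtaining an optimal policy that depends only on requisite observations. First I would fix an arbitrary optimal policy $\pi$ and let $X \in \Pad$ be a nonrequisite observation, so that $X \perp \utilvarsd \mid (\Pad \cup \{D\} \setminus \{X\})$ in $\cid$ (equivalently, by Lemma \ref{theorem:d-separation-independence}, conditional independence holds in $\scims{\pi}$ for a suitable modification). I would then define a new policy $\tilde\pi$ that ignores $X$: concretely, pick a reference value $x_0 \in \dom(X)$ and set $\tilde\pi(\pad) := \pi(\pad\!\setminus\! x, x_0)$, or more carefully average over $X$ according to its conditional distribution given the remaining observations. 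The key computation is that $\EEs{\tilde\pi}[\totutilvar] = \EEs{\pi}[\totutilvar]$: since $X$ is d-separated from the downstream utilities given the other observations and $D$, the conditional distribution $\Pr(\utilvarsd \mid \pad, d)$ does not actually depend on the value of $X$ once the other parents are fixed, so the value of the decision for each fixed setting of $X$ contributes identically to the expected utility regardless of whether we condition the decision on $X$. The utilities not downstream of $D$ are unaffected by the policy at all. Hence $\tilde\pi$ remains optimal.

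The induction is then straightforward: the minimal reduction removes \emph{all} nonrequisite information links, and applying the above argument repeatedly — each time to a currently-nonrequisite observation — yields after finitely many steps a policy $\tilde\pi$ depending only on requisite observations. One subtlety I would need to handle carefully is that removing one information link could, in principle, change which of the remaining observations are nonrequisite; I would argue (as in Theorem 3 of \citep{Lauritzen2001}) that the set of requisite observations is well-defined independently of the order of removal, using the d-separation \emph{intersection} property (Lemma \ref{le:d-sep-intersection-property}) to show that if $X$ and $Y$ are each individually nonrequisite given the other observations, they can be removed jointly — i.e. the collection of nonrequisite links can be removed simultaneously without any of them ``becoming requisite'' along the way.

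The main obstacle I expect is precisely this bookkeeping: making rigorous the claim that d-separation from $\utilvarsd$ given the remaining observations is preserved (or can be re-established) as information links are deleted, so that the minimal reduction $\rcid$ is the correct target and $\tilde\pi$ is simultaneously a valid policy for $\rcid$. The intersection property is the right tool — it lets me pass from ``$X$ nonrequisite given the rest'' and ``$Y$ nonrequisite given the rest'' to ``$\{X,Y\}$ jointly d-separated from utilities given the rest'' — and iterating it (or applying it to the full set of nonrequisite observations at once) gives that conditioning the decision only on the requisite observations loses nothing. The rest is a direct expected-utility calculation using Lemmas \ref{theorem:do-calc-insertion-of-obs} and \ref{theorem:d-separation-independence}.
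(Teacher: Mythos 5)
Your proposal is correct and follows essentially the same route as the paper's proof: fix an optimal policy, clamp the nonrequisite observations to a reference value, use the d-separation intersection property (\cref{le:d-sep-intersection-property}) to jointly d-separate all nonrequisite parents from $\utilvarsd$ given $\{D\}$ and the requisite parents, and conclude by the expected-utility computation via \cref{theorem:do-calc-insertion-of-obs}, with non-descendant utilities handled trivially. The paper simply does the removal all at once rather than iteratively, which is exactly where your own reduction ends up.
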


\begin{proof}
First partition $\Padg$ into the requisite parents
$\Padr = \{W \in \Pad:W \not \dsepg \utildvars \mid \{\decisionvar\} \cup \Pad \setminus \{W\}\}$, and non-requisite parents $\Padnr = \Padg \setminus \Padr$.

Let $\pi^*$ be an optimal policy in $\scim$.
To construct a $\rcid$-respecting version $\tilde\pi$,
select any value $\tpadnr \in \dom(\Padnr)$ for which $\Prs{\pi^*}(\Padnr = \tpadnr) > 0$.
For all $\padr\in\dom(\Padr)$ and $\exovalv{\decisionvar}\in\dom(\exovarv{\decisionvar})$, let
\begin{align*}
  \tilde\pi (\padr, \padnr, \exovalv{\decisionvar})
  \coloneqq \pi^*(\padr ,\tpadnr, \exovalv{\decisionvar}).
\end{align*}

The policy $\tilde\pi$ is permitted in $\rscim$ because it does not vary with $\Padnr$.

Now let us prove that $\tilde \pi$ that is optimal in $\scim$.
Partition $\utilvar$ into $\utildvars = \utilvars \cap \Descv{\decisionvar}$
and $\utilndvars = \utilvars \setminus \Descv{\decisionvar}$.
$\decisionvar$ is causally irrelevant for every $\utilvar \in \utilndvars$
so every policy $\pi$ (in particular, $\tilde\pi$) is optimal with respect to
$\tundvar \coloneqq \sum_{\utilvar \in \utilndvars}{\utilvar}$.

We now consider $\utildvars$.
By definition, $W \dsepg \utildvars \mid \{\decisionvar\} \cup \Pad \setminus \{W\}$
for every ${W \in \Padnr}$.
By inductively applying the intersection property of d-separation (\cref{le:d-sep-intersection-property}) over elements
of $\Padnr$ we obtain
\begin{equation}\label{eq:padx-indep-tudvar}
    \Padnr \dsep \utildvars \mid \{\decisionvar\} \cup \Padr.
\end{equation}

Next, we establish that
$\EEs{\tilde\pi}[\tudvar] = \EEs{\pi^*}[\tudvar]$ by showing that
${\EEs{\tilde\pi}[\tudvar \mid \pad]} =
{\EEs{\pi^*}[\tudvar \mid \pad]}$ for every
$\pad \in \dom(\Pad)$
with $\Pr(\pad) > 0$.
First, the expected utility of $\tilde\pi$ given any $(\padr, \padnr)$
with $\Pr({\Padr = \padr}, {\padnr = \padnr}) > 0$ is equal
to the expected utility of $\pi^*$ on input $(\padr, \tpadnr)$:
\begin{align*}
  \mathrlap{\EEs{\tilde\pi}[\tudvar \mid \padr, \padnr]}
  \hspace{0.5cm} &
  \\
  &= \sum_{u, \decisionval}{\begin{aligned}[t]\Big(
    &u
    \Pr(\tudvar = u \mid \decisionval, \padr, \padnr) \\
    &\cdot \Prs{\tilde\pi}(\decisionvar = d \mid \padr, \padnr)
  \Big)\end{aligned}}
  \\
  &
  = \sum_{u, \decisionval}{\begin{aligned}[t]\Big(
    &u
    \Pr(\tudvar = u \mid \decisionval, \padr, \tpadnr) \\
    &\cdot \Prs{\pi^*}(\decisionvar = \decisionval \mid \padr, \tpadnr)
  \Big)\end{aligned}}
  \\
  &
    = \EEs{\pi^*}[\tudvar \mid \padr, \tpadnr]
    \intertext{where the middle equality follows from \cref{eq:padx-indep-tudvar} and the
  definition of $\tilde\pi$.
  Second, the expected utility of $\pi^*$ given input $\tpadnr$ is the same as
    its expected utility on any input $\padnr$:
}
  &
  = \max_{\decisionval}{
    \EEs{\pi^*}[\tudvar_\decisionval \mid \padr, \tpadnr]
  }
  \\
  &
  = \max_{\decisionval}{
    \EEs{\pi^*}[\tudvar_\decisionval \mid \padr, \padnr]
  }
  \\
  &
  = \EEs{\pi^*}[\tudvar \mid \padr, \padnr ]
\end{align*}
where the first equality follows from the optimality of $\pi^*$ and
the second from \cref{theorem:do-calc-insertion-of-obs}.
The expression $\EEs{\pi^*}[\tudvar_\decisionval \mid\cdots]$ means that we
first assign the policy $\pi^*$ then intervene to set
$\decisionvar = \decisionval$, which renders $\pi^*$ effectively irrelevant but
formally necessary for creating an SCM.\@
This result shows that $\tilde\pi$ is optimal for $\tudvar$ and has
${\EEs{\tilde\pi}[\tudvar]} = {\EEs{\pi^*}[\tudvar]}$.
Since $\tilde\pi$ is optimal for both $\totutilvard$ and $\tundvar$, $\tilde\pi$ is optimal in $\scim$.
\end{proof}

We now prove \cref{th:observation-sa} by establishing the soundness
and completeness of the value of information criterion.

\begin{lemma}[VoI criterion soundness]\label{le:voi-soundness}
If, in the single-decision CID $\cid$, $X \in \evars \setminus \Descv{D}$ has
\begin{equation*}
    X \perp \utilvarsd \bmid \left(\Pad \cup \{D\} \setminus \{X\}\right)
\end{equation*}
where $\utilvarsd \coloneqq \utilvars\cap\Descv{D}$,
then $\incentivevar$ does not have positive value of information in any SCIM
$\scim$ compatible with $\cid$.
\end{lemma}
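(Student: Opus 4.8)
The plan is to reduce the statement to the claim that adding the information link $X\to\decisionvar$ does not change the maximum attainable utility, and then to obtain that equality from the $\reducedgraph$-respecting optimal policy lemma (\cref{le:reduced-optimal-policy}). By \cref{def:observation-incentive-sa}, $X$ has positive value of information in $\scim$ exactly when $X$ is material in the SCIM $\scim_{X\to D}$ obtained by adding the edge $X\to\decisionvar$; by the definition of materiality this means $\attutil\bigl((\scim_{X\to D})_{X\not\to D}\bigr) < \attutil(\scim_{X\to D})$. Deleting the just-added edge recovers the original model, i.e.\ $(\scim_{X\to D})_{X\not\to D} = \scim$, so it suffices to show $\attutil(\scim) = \attutil(\scim_{X\to D})$.

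One inequality is immediate. Every policy $\pi$ for $\scim$ is also a policy for $\scim_{X\to D}$ — it simply ignores the extra observation $X$ — and since the two models share all structural functions (the decision carries none), $\EEs{\pi}[\totutilvar]$ takes the same value in both. Hence $\attutil(\scim)\le\attutil(\scim_{X\to D})$: an extra information link can never hurt.

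For the reverse inequality I would observe that the hypothesis $X\perp\utilvarsd \mid \bigl(\Pad\cup\{\decisionvar\}\setminus\{X\}\bigr)$ is precisely the statement that $X$ is a \emph{nonrequisite} observation in the graph $\causalgraph_{X\to D}$: there $X$ is a parent of $\decisionvar$, and $\utilvarsd = \utilvars\cap\Descv{\decisionvar}$ is unchanged because $X\notin\Descv{\decisionvar}$ and the new edge points into $\decisionvar$, so it creates neither a cycle nor new descendants of $\decisionvar$. Applying \cref{le:reduced-optimal-policy} to the SCIM $\scim_{X\to D}$ then yields an optimal policy $\tilde\pi$ for $\scim_{X\to D}$ that depends only on requisite observations, and in particular not on $X$. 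Such a $\tilde\pi$ is a legitimate policy for $\scim$ as well, so $\attutil(\scim)\ge\EEs{\tilde\pi}[\totutilvar] = \attutil(\scim_{X\to D})$. Combining the two inequalities gives $\attutil(\scim) = \attutil(\scim_{X\to D})$, hence $X$ is not material in $\scim_{X\to D}$ and has no positive value of information in $\scim$; since $\scim$ was an arbitrary SCIM compatible with $\causalgraph$, the lemma follows.

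The step to be careful about is the bookkeeping that identifies the given d-separation with nonrequisiteness of $X$ in $\causalgraph_{X\to D}$, together with checking that \cref{le:reduced-optimal-policy} genuinely applies to $\scim_{X\to D}$ — which it does, since $\scim_{X\to D}$ is a bona fide single-decision SCIM: the decision node has no structural function, so granting it an extra parent requires no modification of $\structfns$. Everything else is just the observation that an optimal policy which ignores $X$ certifies the same attainable utility with and without the link, so the link is worthless.
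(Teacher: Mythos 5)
Your proposal is correct and follows essentially the same route as the paper's proof: both arguments hinge on applying \cref{le:reduced-optimal-policy} to $\scim_{X\to D}$ to obtain an optimal policy that ignores $X$, and then observing that this policy is admissible in the model without the information link, so the link adds nothing to the attainable utility. Two points deserve tightening. First, your reduction assumes $(\scim_{X\to D})_{X\not\to D}=\scim$, which holds only when $X\notin\Pad$; the lemma permits $X$ to already be an observation (any non-descendant of $D$ qualifies), in which case $\scim_{X\to D}=\scim$ and materiality compares $\scim$ against $\scim_{X\not\to D}$. The paper sidesteps this by phrasing both inequalities in terms of $\scim_{X\not\to D}$, and your argument goes through verbatim with that substitution. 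Second, and more substantively, the step you flag as ``bookkeeping'' is the one place the paper actually has to argue: the hypothesis is a d-separation in $\causalgraph$, whereas nonrequisiteness of $X$ must be certified in $\causalgraph_{X\to D}$, which contains an additional edge and hence potentially additional d-connecting paths from $X$ to $\utilvarsd$. Your justification (no new cycle, no new descendants of $\decisionvar$) shows that the conditioning set and the set $\utilvarsd$ are unchanged, but not that the separation survives the added edge. It does survive: any new path must begin with the edge $X\to \decisionvar$, and is then blocked either at $\decisionvar$ itself (a chain $X\to \decisionvar\to\cdots$ with $\decisionvar$ in the conditioning set) or, if $\decisionvar$ acts as a collider on the path, at the next node, which is a parent of $\decisionvar$ other than $X$ and hence also in the conditioning set. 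With that sentence added, your proof matches the paper's.
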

 
The result is already known from \citep{Lauritzen2001,fagiuoli1998note}, but we prove it here to make the paper more self-contained.
\begin{proof}
    Let $\scim = \langle \causalgraph,\exovars,\structfns,\exoprob \rangle$ be any SCIM compatible with $\causalgraph$.
    Let $\causalgraph_{X \to D}$ and $\causalgraph_{X \not \to D}$ be versions of $\causalgraph$ modified by adding and
    removing $X \to D$ respectively.
    Let $\reducedgraph_{X \to D}$ be the minimal reduction of $\causalgraph_{X \to D}$.
    Let $\scim_{X \not \to D}:=\langle \causalgraph_{X \not \to D},\exovars,\structfns,\exoprob \rangle$ and
    $\rscim_{X \to D}:=\langle \reducedgraph_{X \to D},\exovars,\structfns,\exoprob \rangle$ be SCIMs with the same domains and structural functions.

    By \cref{le:reduced-optimal-policy}, there is a $\reducedgraph$-respecting policy $\tilde \pi$
    admissible in $\rscim_{X \to D}$ and optimal in $\scim_{X \to D}$.
    We prove that $\reducedgraph_{X \to D}$ is a subgraph of $\causalgraph_{X \not \to D}$,
    meaning that $\tilde \pi$ is also admissible in $\scim_{X \not \to D}$.
    By assumption, $\causalgraph$ has $X \perp \utildvars \bmid \left(\Pad \cup \{D\} \setminus \{X\}\right)$.
    Adding $X \to D$ to $\causalgraph$ cannot cause $X$ to be d-connected to $\utildvars$ given $\Pad \cup \{D\}$,
    because any new path along $X \to D$ is blocked by $D$ and $\Pad \setminus \{X\}$.
    So $\reducedgraph_{X \to D}$ is a version of $\causalgraph$ with $X \to D$ (and possibly other nodes) removed.
    This makes it a subgraph of $\causalgraph_{X \not \to D}$, implying that $\tilde \pi$ is admissible in $\scim_{X \not \to D}$.

    Since $\tilde \pi$ is admissible in $\scim_{X \not \to D}$ and optimal in $\scim_{X \to D}$,
    $\attutil(\scim_{X \not \to D}) \not < \attutil(\scim_{X \to D})$.
\end{proof}

\begin{lemma}[VoI criterion completeness]\label{le:voi-completeness}
If in the single-decision CID $\cid$, $\incentivevar\in\evars \setminus\Descv{\decisionvar}$ is
d-connected to a utility node that is a descendant of $\decisionvar$ conditional on the
decision and other parents:
\begin{equation}
    \incentivevar \not \perp \utilvarsd \bmid \left(\Pad \cup \{\decisionvar\} \setminus \{\incentivevar\}\right)
\end{equation}
where $\utilvarsd \coloneqq \utilvars\cap\Descv{\decisionvar}$
then $\incentivevar$ has VoI in at least one SCIM $\scim$
compatible with $\cid$.
\end{lemma}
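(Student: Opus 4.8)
The plan is to prove completeness by an explicit construction. From the assumed d-connection I build one SCIM $\scim$ compatible with $\cid$ such that adding the information link $X\to\decisionvar$ strictly increases the attainable utility, $\attutil(\scim) < \attutil(\scim_{X\to\decisionvar})$; this is exactly what it means for $X$ to be material in $\scim_{X\to\decisionvar}$, hence for $X$ to have VoI. First I fix a minimal (simple, shortest) path $p=(V_0=X,V_1,\dots,V_m)$ that is d-connected from $X$ to some utility node $U^* = V_m \in \utilvarsd$ given $\sZ := \Pad\cup\{\decisionvar\}\setminus\{X\}$. The SCIM will use only $p$, the colliders' ``decoding'' paths, and one directed path $\decisionvar\pathto U^*$; all other variables get a constant structural function. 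The idea is to run a \emph{one-time pad} along $p$: a secret bit recoverable from $X$ together with the other observations, but uniform and independent of the other observations alone, which determines which decision makes $U^*$ largest.

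The colliders are the only reason this is non-trivial. For a collider $V_i$ of $p$, d-connectedness supplies a descendant of $V_i$ in $\sZ$; since utility nodes are childless it lies in $\Pad\cup\{\decisionvar\}$, and it is not $\decisionvar$, because any directed path from $V_i$ to $\decisionvar$ already passes through a parent of $\decisionvar$, which is in $\Pad$ and (in the non-degenerate case $X\notin\Pad$) is not $X$. So each collider has a \emph{decoder} $Z_i\in\Pad\setminus\{X\}$ --- a node that $\decisionvar$ observes whether or not the link $X\to\decisionvar$ is present --- and I fix a shortest directed path $V_i\pathto Z_i$. By d-connectedness the non-collider internal nodes of $p$ lie outside $\sZ$, so $\decisionvar$ never observes them.

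Now wire the pad: at each fork node of $p$ inject a fresh uniform exogenous bit, along each directed chain-segment of $p$ forward-copy the bit carried into it, at each collider $V_i$ output the XOR of the two bits flowing in and relay that value down to $Z_i$, and let $C$ be the bit that arrives at $U^*$ along $p$. Forward-copy $\decisionvar$'s value along a directed path $\decisionvar\pathto U^*$ (one exists since $U^*\in\utilvarsd$), and define $U^* := 1$ if that copy equals $C$ and $0$ otherwise, with every other utility node $\equiv0$, so $\totutilvar=U^*\in\{0,1\}$ and $\attutil\le1$. Two properties must then be verified: (i) $C$ is a deterministic function of $X$ and $\Pad\setminus\{X\}$ --- obtained by telescoping the collider-XOR relations along $p$ using the observed $Z_i$ --- so that with the added link the policy $\decisionvar:=C$ attains $\EEs{\pi}[\totutilvar]=1$ in $\scim_{X\to\decisionvar}$; and (ii) $C$ is uniform and independent of $\Pad\setminus\{X\}$, so that without the link every policy has $\Prs{\pi}(\decisionvar=C)=\tfrac12$ and $\attutil(\scim)=\tfrac12$. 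Together (i) and (ii) give $\tfrac12=\attutil(\scim)<\attutil(\scim_{X\to\decisionvar})=1$.

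I expect property (ii) to be the main obstacle: proving, for an \emph{arbitrary} d-connecting path with several colliders, that no combination of the observed decoder values (plus the constant off-path observations) reveals the secret bit. The clean way is to track, for each endogenous variable, which injected exogenous bits its value can depend on --- an $\mathbb{F}_2$-linear span computation --- and show the ``form'' of $C$ is not in the span of the forms of $\Pad\setminus\{X\}$, while it is once the form of $X$ is added (property (i)). This is where minimality of $p$ is used: a linear relation exhibiting $C$ among the decoder forms would yield a strictly shorter d-connecting $X$--$\utilvarsd$ path, contradicting the choice of $p$. A secondary, purely combinatorial nuisance is that the decoding paths and the chosen $\decisionvar\pathto U^*$ path may share nodes with $p$ or with each other; this is handled by letting variables carry tuples so the secret-bit channel and the $\decisionvar$ channel stay separated, and by choosing a globally minimal configuration so overlaps can be tracked rather than case-split. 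The remaining degenerate cases ($X\in\Pad$, where VoI coincides with materiality; or $p$ a single edge) follow from the same construction with trivial simplifications.
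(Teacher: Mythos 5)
Your construction is essentially the paper's: the lemma is proved by reduction to the response-incentive completeness construction (Lemma~\ref{theorem:ri-graph-criterion-completeness}), which builds exactly this one-time pad along a d-connecting path from $X$ to $\utilvarsd$ --- uniform sources at forks, parities at colliders (written there as products over $\{-1,1\}$ rather than XOR) relayed down directed paths to observed descendants in $\Pad$, and utility $1$ iff the decision matches the final secret bit, so that the attainable utility drops from $1$ to $0$ when the $X$-channel is cut. The two obstacles you flag are handled there not by an $\mathbb{F}_2$-span/minimality argument but by without-loss-of-generality path surgery (making the decoder paths mutually non-intersecting and normalizing their intersections with the $X \pathto D$ and $D \pathto U$ paths, plus a short list of node-coincidence special cases), after which every off-path observation is constant and the independence of the secret bit from the remaining observations is the standard telescoping-pad fact.
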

 This follows from the response incentive completeness
\cref{theorem:ri-graph-criterion-completeness} in
\cref{sec:appendix-ri}, so we defer the proof to that section.
 
\subsection{Response Incentive Criterion}\label{sec:appendix-ri}
The \nameref{sec:response} section contains a proof of the soundness of the
response incentive criterion.
We now prove its completeness in order to finish the proof of
\cref{theorem:ri-graph-criterion}.
\Cref{fig:ri-completeness-2} illustrates the model constructed in the proof.

\begin{lemma}[Response Incentive Criterion Completeness]\label{theorem:ri-graph-criterion-completeness}
If $\incentivevar \pathto \decisionvar$ in the minimal reduction $\reducedgraph$
of a single-decision CID $\cid$ then there is a response incentive on
$\incentivevar$ in at least one SCIM $\scim$ compatible with $\cid$.
\end{lemma}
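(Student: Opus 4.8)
The plan is to turn the directed path $\incentivevar \pathto \decisionvar$ in $\reducedgraph$ into an explicit SCIM compatible with $\cid$ (the one sketched in \cref{fig:ri-completeness-2}) in which the information reaching $\decisionvar$ along that path is exactly the value of $\incentivevar$, and whose unique optimal behaviour is to ``play that value back''; intervening on $\incentivevar$ then provably changes the decision. First I would fix the combinatorial skeleton. Write the path in $\reducedgraph$ as $\incentivevar = \evar^0 \to \evar^1 \to \cdots \to \evar^{k} = W$ followed by the information link $W \to \decisionvar$ (with $k = 0$, i.e.\ $W = \incentivevar \in \Pad$, allowed). Since $\reducedgraph$ is its own minimal reduction, the surviving link $W \to \decisionvar$ is requisite, i.e.\ $W$ is d-connected in $\reducedgraph$ to some utility node $U^{\ast} \in \utilvarsd$ given $\{\decisionvar\}\cup\Pad\setminus\{W\}$; fix a witnessing d-connecting path $\sigma$ from $W$ to $U^{\ast}$, and for each collider on $\sigma$ fix a directed path from that collider to a node of $\{\decisionvar\}\cup\Pad\setminus\{W\}$ that activates it. Let $\mathcal{N}$ collect all nodes on the front path $\evar^0 \to \cdots \to W$, on $\sigma$, and on these activation paths.

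Next I would define the SCIM. Give every variable in $\mathcal{N}$ the domain $\{0,1\}$ and every other variable a one-element domain (hence a constant structural function), and let all exogenous variables be independent and uniform. Along the front path set $\evar^i := \evar^{i-1}$, so $W$ carries the value of $\incentivevar$; along $\sigma$ let each non-collider copy its path-predecessor and each collider output the XOR of its two path-parents, and let each activation path copy; let every utility other than $U^{\ast}$ be constant, and let $U^{\ast}$ be a $\{-1,+1\}$-valued ``matching game'' paying $+1$ exactly when $\decisionvar$ equals the bit that can be reconstructed at $U^{\ast}$ from its path-parents together with the activating observations. One then checks --- mostly bookkeeping, with a little care when the chosen paths share nodes --- that this is a well-defined SCIM compatible with $\cid$: every function depends only on graph-parents, and no cycle is created since every edge used already lies in $\cid$.

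Finally I would read off the response incentive. For each decision context $\pad$ of positive probability the reconstructed bit is a deterministic function of $\pad$, it equals a copy of $W$, and all other observations are constant and so carry no information about it; hence the unique maximiser of $\EEs{\pi}[U^{\ast} \mid \pad]$ is that bit, so every optimal policy sets $\decisionvar$ equal to it on positive-probability contexts, i.e.\ $\decisionvar$ copies $W$, which copies $\incentivevar$. For any $\exovals$ (each has positive exogenous probability) and the intervention $\doo(\incentivevar = x)$ with $x \neq \incentivevar(\exovals)$ we then get $\decisionvar_{x}(\exovals) = x \neq \incentivevar(\exovals) = \decisionvar(\exovals)$ under every optimal $\pi^{\ast}$, which is precisely a response incentive on $\incentivevar$. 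This argument also discharges the deferred \cref{le:voi-completeness}: apply the construction to $\cid_{\incentivevar\to\decisionvar}$ with the one-edge path $\incentivevar \to \decisionvar$; the hypothesis makes $\incentivevar$ requisite there, so the link survives the minimal reduction, and in the resulting model no optimal policy is constant in the observation $\incentivevar$, so deleting $\incentivevar \to \decisionvar$ strictly lowers attainable utility, i.e.\ $\incentivevar$ has positive VoI.

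The step I expect to be the main obstacle is making the $\sigma$-part airtight: showing that the XOR-at-colliders assignment together with the activating observations really does let $U^{\ast}$ recover $W$ uniquely given the decision context, that no information about that bit reaches $\decisionvar$ along any other route (in particular that colliders passed through $\decisionvar$ itself, or shared nodes between $\sigma$, the front path, and the activation paths, cause no leakage or functional conflict), and that the matching game has a \emph{strict} optimum so that \emph{all} optimal policies --- not merely the $\reducedgraph$-respecting one from \cref{le:reduced-optimal-policy} --- exhibit the response. This is essentially the same combinatorial core that underlies VoI completeness, here with the front-path copying and the strictness of the game layered on top.
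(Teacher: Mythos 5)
Your skeleton is the same as the paper's (isolate the requisite link $W \to D$ terminating the directed path from $X$, pick a d-connecting path from $W$ to a downstream utility, equip its colliders with activation paths to observations, and build a matching game whose strict optimum forces the decision to track information arriving through $W$), but the construction as specified breaks at exactly the point you flag as the ``main obstacle,'' and the fix is not mere bookkeeping. The payoff-relevant bit cannot be the value of $X$: the d-connecting path from $W$ to $U^{\ast}$ is in general not directed out of $W$ (its first segment is $S^0 \pathto W$, so $W$ is a \emph{sink}), and $X$ need not be an ancestor of any utility node except through $D$. Hence the utility can only reward matching the far source $S^m$, and the decision must reconstruct $S^m$ from $W$ plus the collider observations, which requires $W$ to reveal $S^0$. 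If, as you propose, the front path simply copies a (implicitly uniformly random, since you give it no structural function) $X$ into $W$, then at the junction node $W$ either overwrites $S^0$ (game unwinnable, no incentive) or XORs with an unobserved random $X$ (again uninformative). The paper's construction resolves this by making $X$ \emph{deterministic} ($X=1$), using the three-valued domain $\{-1,0,1\}$ with multiplicative gating $Z = X\cdot S^0$ so that $W$ equals $S^0$ unintervened, and demonstrating the response via the intervention $\doo(X=0)$, which annihilates the channel: the post-intervention optimal expected utility drops from $1$ to $0$, so $D_{X=0}(\exovals)\ne D(\exovals)$ for some $\exovals$. Your concluding step ``$D_x(\exovals)=x\ne X(\exovals)=D(\exovals)$'' therefore does not follow from your model; the decision never literally plays back $X$.

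Two further points the paper needs and you elide: (i) the normalisations making the front path, the d-connecting path, and the $D\pathto U$ path intersect only at designated nodes $Z$ and $Y$, together with the three-way case analysis for when $X$, $S^0$, and $Z$ coincide --- these determine which structural function the junction node gets and are where a naive ``copy forward'' assignment produces functional conflicts; and (ii) your statement that $U^{\ast}$ pays off according to ``the bit reconstructed at $U^{\ast}$ from its path-parents together with the activating observations'' is not a well-formed structural function, since the activating observations are parents of $D$, not of $U^{\ast}$. A repaired binary/XOR variant of your idea (deterministic $X$, junction node XORing the front-path value with $S^0$, intervention flipping the bit so that every optimal policy's output flips pointwise) could be made to work and would even give a response at every $\exovals$ rather than just one, but that is a different construction from the one you wrote down, and it still needs the intersection and coincidence analysis from \cref{theorem:ri-graph-criterion-completeness}.
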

 \newcommand*{\pwu}{{\overline{\riparvar\utilvar}}}
\newcommand*{\pdu}{{\overrightarrow{\decisionvar\utilvar}}}
\newcommand*{\pxd}{\overrightarrow{\incentivevar\decisionvar}}
\newcommand*{\wusrc}{S}
\newcommand*{\wucol}{C}
\newcommand*{\wuobs}{O}
\newcommand*{\pco}[1]{\ensuremath{\overrightarrow{\wucol^{#1}\wuobs^{#1}}}}
\newcommand*{\riinter}{\ensuremath{\incentivevar = 0}}

\begin{proof}
\begin{figure*}
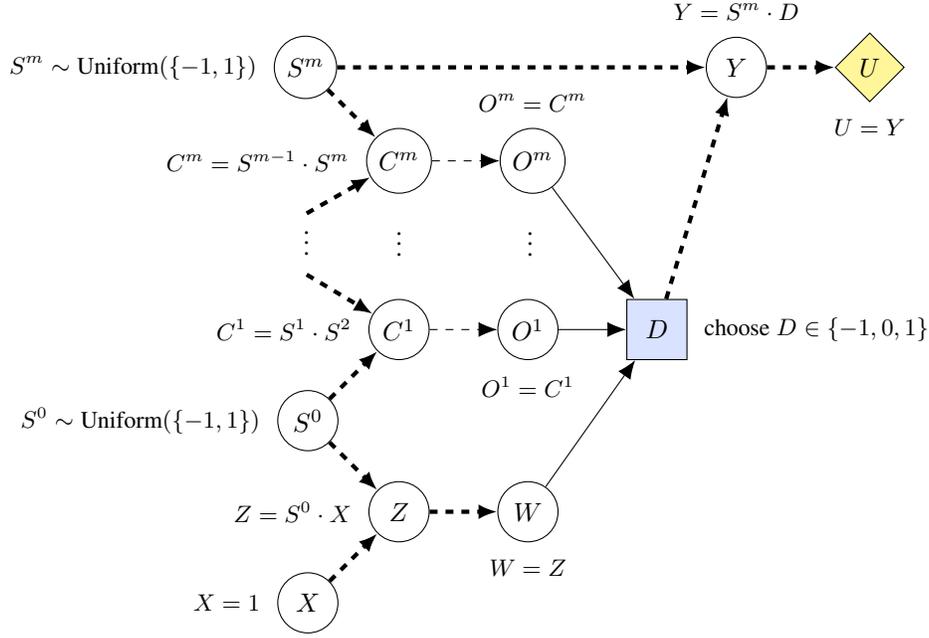

  \centering
    \begin{influence-diagram}[
      every node/.append style = {circle},
      node distance=9mm
  ]
    \node[] (Sl) {$\wusrc^m$};
    \node[below right = of Sl] (Olp) {$\wucol^{m}$};
    \node[right = of Olp] (Ol) {$\wuobs^{m}$};
    \node[below = of Olp, yshift=-5mm] (\wuobs1p) {$\wucol^{1}$};
    \node[right = of O1p] (O1) {$\wuobs^{1}$};
    \node[below left = of O1p] (S0) {$\wusrc^0$};
    \node[below right = of S0, draw=none] (Oh)  {};
    \node[right = of Oh] (O) {$\riparvar$};
    \node[left = of O] (Z) {$Z$};
    \node[below left = of Z] (X) {$\incentivevar$};

    \node[draw=none] (sd) at ($(Sl)!0.5!(S0)$)  {$\rvdots$};
    \node[draw=none] at ($(Ol)!0.5!(O1)$)  {$\rvdots$};
    \node[draw=none] at ($(Olp)!0.5!(O1p)$)  {$\rvdots$};

    \node[decision, right = of O1] (A)  {$\decisionvar$};
    \node[right = of Sl, xshift=4cm] (Up)  {$Y$};
    \node[right = of Up, utility] (U)  {$\utilvar$};

    \path[dashed, ultra thick]
    (Sl) edge[->] (Up)
    (Sl) edge[->] (Olp)
    (S0) edge[->] (O1p)
    (S0) edge[->] (Z)
    (Z) edge[->] (O)
    (X) edge[->] (Z)
    (A) edge[->] (Up)
    (Up) edge[->] (U)
    (sd.south) edge[->] (O1p)
    (sd.north) edge[->] (Olp)
    ;

    \path[dashed]
    (Olp) edge[->] (Ol)
    (O1p) edge[->] (O1)
    ;

    \path
    (Ol) edge[->, information] (A)
    (O1) edge[->, information] (A)
    (O) edge[->, information] (A)
    ;

    \begin{scope}[
      node distance = 1mm,
      every node/.style = {rectangle, draw=none},
      ]
    \node[above = of Up] {\small $Y=\wusrc^m \cdot D$};
    \node[below = of O1]  {\small $\wuobs^1=\wucol^1$};
    \node[above = of Ol] {\small $\wuobs^m=\wucol^m$};
    \node[below = of U] {\small $\utilvar=Y$};
    \node[below = of O] {\small $\riparvar = Z$};
    \node[left = of Z] {\small $Z = \wusrc^0 \cdot X$};
    \node[left = of X] {\small $\incentivevar = 1$};
    \node[left = of Olp] {\small $\wucol^m = \wusrc^{m-1}\cdot \wusrc^{m}$};
    \node[left = of Sl] {\small $\wusrc^m\sim \textrm{Uniform}(\{-1,1\})$};
    \node[left = of S0] {\small $\wusrc^0\sim \textrm{Uniform}(\{-1,1\})$};
    \node[left = of O1p] {\small $\wucol^1 = \wusrc^1\cdot \wusrc^2$};
    \node[right = of A] {\small choose $\decisionvar \in \{-1, 0, 1\}$};
    \end{scope}

  \end{influence-diagram}
  \caption{
    Outline of the variables involved in the response incentive construction.
    Every graph that satisfies the response incentive graphical criterion
    contains this structure (allowing all dashed paths except
    those to $\wucol^i$ or $Y$ to have length zero).
    An optimal policy for the given model is
    $D = \riparvar \cdot \prod_i \wuobs^i = \wusrc^m$,
    yielding utility $U = Y = \incentivevar(\wusrc^m)^2 = 1$,
    and all optimal policies must depend on the value of $\riparvar$.
  }\label{fig:ri-completeness-2}
\end{figure*}
Starting from the assumption that $\incentivevar \pathto \decisionvar$ in
$\rcid$, we explicitly construct a compatible model for $\cid$
for which the decision of every optimal policy causally depends on the value of
$\incentivevar$.
Let $\pxd$ be a directed path from $\incentivevar$ to $\decisionvar$
that only contains a single requisite observation that we label
$\riparvar$ (if $X$ is itself a requisite observation, then $W$ and $X$ is the
same node).
Since $\riparvar$ is a requisite observation for $\decisionvar$, there exists
some utility node $\utilvar$ descending from $\decisionvar$ that is d-connected
to $\riparvar$ in $\cid$ when conditioning on $\reqobscond$.
Let $\pdu$ be a directed path from $\decisionvar$ to $\utilvar$ and
let $\pwu$ be a path between $\riparvar$ and $\utilvar$ that is active
when conditioning on $\reqobscond$.
By the definition of d-connecting paths,
$\pwu$ has the following structure ($m \ge 0$):
\begin{center}
\begin{tikzpicture}[scale=0.9]
\node at (0, 0) (W) {$\riparvar$};
  \node at (1, 1) (S0) {$\wusrc^0$};
  \draw (W) edge[<-, dashed] (S0);
  \node at (2, 0) (O1) {$\wucol^1$};
  \draw (S0) edge[->, dashed] (O1);
  \node at (3, 1) (S1) {$\wusrc^1$};
  \draw (O1) edge[<-, dashed] (S1);
  \node[minimum height = 1.5em, minimum width = 2em, draw=none] at (4, 0)
    (O2) {};
  \draw (S1) edge[->, dashed] (O2);
  \node at (4.5, 0.5) {$\cdots$};
  \node[minimum height = 1.5em, minimum width = 2em, draw=none] at (5, 1)
    (Sm1) {};
  \node at (6, 0) (Om) {$\wucol^m$};
  \draw (Sm1) edge[->, dashed] (Om);
  \node at (7, 1) (Sm) {$\wusrc^m$};
  \draw (Om) edge[<-, dashed] (Sm);
  \node at (8, 0) (U) {$\utilvar$};
  \draw (Sm) edge[->, dashed] (U);
\end{tikzpicture}
\end{center}
consisting of directed sub-paths leaving source nodes $\wusrc^i$ and entering
collider nodes $\wucol^i$, where there is a directed path from each collider to
$\reqobscond$ and no non-collider node is in $\reqobscond$.
It may be the case that $\riparvar$ and $\wusrc^0$ are the same node.
For each $i \in \{1, \ldots, m\}$, let $\pco{i}$ be a directed path from
$\wucol^i$ to some $\wuobs^i \in \Pad$ such that no other node along $\pco{i}$
is in $\Pad$.

We make the following assumptions without loss of generality:
\begin{itemize}
  \item $\pwu$ first intersects $\pdu$ at some variable $Y$ (possibly $Y$
    is $\utilvar$) and thereafter both $\pwu$ and $\pdu$ follow the same directed
    path from $Y$ to $U$ (otherwise, let $Y$ be the first intersection point and
    replace the $Y \pathto U$ sub-path of $\pwu$ with the $Y \pathto U$ sub-path
    of $\pdu$).
  \item The $\wusrc^0 \pathto W$ sub-path of reversed $\pwu$ first intersects $\pxd$
    at some node $Z$ and thereafter both follow the same directed path from $Z$
    to $\riparvar$ (same argument as for $Y$).
  \item The paths $\pco{i}$ are mutually non-intersecting
    (if there is an intersection between $\pco{i}$ and $\pco{j}$ with $j \ne i$
    then replace the part of $\pwu$ between $\wucol^i$ and $\wucol^j$ with the
    path through the intersection point, which becomes the new collider; this
    can only happen finitely many times as it reduces the number of collider
    nodes).
\end{itemize}
The resulting structure is shown in \cref{fig:ri-completeness-2}.

We now formally define the model represented in the figure.
The domains of all endogenous variables are set to $\{-1, 0, 1\}$.
All exogenous variables are given independent discrete uniform
distributions over $\{-1, 1\}$.
Unless otherwise specified, we set $B = A$ for each edge $A \to B$ within the
directed paths shown in \cref{fig:ri-completeness-2},
i.e. $\fv{B}(\pav{B}, \exovalv{B}) = a$.
Nodes at the heads of directed paths can therefore be defined in terms of nodes
at the tails.
We begin by describing functions for the ``default'' case depicted by
\cref{fig:ri-completeness-2}, and discuss adaptations for various special cases
below.
\begin{itemize}
\item $\wusrc^i = \exovarv{\wusrc^i}$, giving
  $\wusrc^i$ a uniform distribution over $-1$ and $1$.
\item $U = Y$, and
\item $Y = \wusrc^m \cdot D$, so $D$ must match $S^m$ to optimize utility.
\item $\wucol^i = \wusrc^{i - 1} \cdot \wusrc^i$, and
\item $\wuobs^i = \wucol^i$, so the collider $\wucol^i$
  reveals (only) whether $\wusrc^{i - 1}$ and $\wusrc^i$ have the same sign or
  not.
\item $X=1$,
\item $Z=X\cdot S^0$, and
\item $W=Z$, so $W$ reflects the value of $S^0$, unless $X$ is intervened upon.
\end{itemize}
All other variables not part of any named path are set to $0$.

Special cases arise when two or more of the labeled nodes in 
\cref{fig:ri-completeness-2} refer to the same variable.
When $W$, $Y$, or $O^i$ is the same node as one of its parents,
then it simply takes the function of this parent (instead of copying its value).
Meanwhile, the $S^i$, $C^i$, and $Y$ nodes must be distinct by construction, so
no special cases treatment is required.
Finally, the functions for $X$, $S^0$ and $Z$ are adapted per the following cases:

\emph{Case 1: $X$, $S^0$, and $Z$ are all the same node}.
Let
$X=Z=S^0=\exovarv{S^0}$, i.e.\ the node takes a uniform distribution over
$\{-1, 1\}$.

\emph{Case 2: $Z$ is the same node as $S^0$, but different from $X$}.
In this case, let $Z=S^0=X\cdot \exovarv{S^0}$.

\emph{Case 3: $X$ is the same node as $Z$, but different from $S^0$}.
In this case, let $X=Z=S^0$.

The final combination of $X$ and $S^0$ being the same, while different from $Z$,
cannot happen by the definition of $Z$.

Regardless of which case applies, an optimal policy is
$\decisionvar = \riparvar \cdot \prod_{i=1}^m{\wuobs^i}$,
which yields a utility of $1$.

Now consider the intervention that sets ${\incentivevar = 0}$,
and consequently $\riparvar_{\riinter} = Z_{\riinter} = 0$.
Without the information in $\riparvar$, $\wusrc^m$ is independent of
${(\Pad)}_{\riinter}$ and hence independent of $\decisionvar_{\riinter}$
regardless of the selected policy.\footnote{
  Note that if $m = 0$ and $\wusrc^0$ is $Z$ then
  ${(\wusrc^m)}_{\riinter} = 0$ but the fact that this is predictable is
  irrelevant because we compare
  $\decisionvar_{\riinter}$ against the pre-intervention variable $\wusrc^m$,
  which remains independent of ${(\Pad)}_{\riinter}$.}
Therefore,
$\EEs{\pi}[\utilvar_{\decisionvar_{\riinter}}]
= \EEs{\pi}[\wusrc^m \cdot \decisionvar_{\riinter}]
= \EEs{\pi}[\wusrc^m] \cdot \EEs{\pi}[\decisionvar_{\riinter}]
= 0$
for every
policy $\pi$.
In particular, for any optimal policy $\pi^*$,
$\EEs{\pi^*}[\utilvar_{\decisionvar_{\riinter}}] \ne
\EEs{\pi^*}[\utilvar] = 1$
so there must be some $\exovals$ such that
$\decisionvar_{\riinter}(\exovals) \ne \decisionvar(\exovals)$.
Therefore, there is a response incentive on $\incentivevar$.
\end{proof}

With this result we can now prove the completeness of the value of information
criterion.
\begin{proof}[Proof of \cref{le:voi-completeness} (VoI criterion completeness)]
If $\incentivevar \not\dsep \utilvarsd \mid
(\Pad \cup \{\decisionvar\} \setminus \{\incentivevar\})$
then $\incentivevar$ is a requisite observation in
$\causalgraph_{\incentivevar \to \decisionvar}$
(where $\causalgraph_{\incentivevar \to \decisionvar}$ is $\causalgraph$
modified to include the edge $\incentivevar \to \decisionvar$ if the edge does
not exist already) and $\incentivevar \to \decisionvar$ is a path in
the minimal reduction $\reducedgraph_{\incentivevar \to \decisionvar}$.
By \cref{theorem:ri-graph-criterion-completeness},
there exists a model $\scim_{\incentivevar \to \decisionvar}$ compatible with
$\causalgraph_{\incentivevar \to \decisionvar}$ that has a response incentive on
$\incentivevar$.
If every optimal policy for $\scim_{\incentivevar \to \decisionvar}$ depends on
$\incentivevar$ then it must be the case that
$\attutil(\scim_{\incentivevar \not\to \decisionvar})
< \attutil(\scim_{\incentivevar \to \decisionvar})$.
\end{proof}

\subsection{Value of Control Criterion}\label{sec:appendix-voc}

The \nameref{sec:control-incentives} section contains a proof of the
soundness of the value of control criterion.
We complete the proof of \cref{th:soft-sa} by showing that the criterion is also
complete.

\begin{lemma}[VoC criterion completeness]\label{le:voc-completeness}
If $\incentivevar \pathto \utilvars$ in the minimal reduction $\rcid$ of
a single-decision CID $\cid$ and $\incentivevar \not\in \{\decisionvar\}$ then
$\incentivevar$ has positive value of control in at least one SCIM $\scim$
compatible with $\cid$.
\end{lemma}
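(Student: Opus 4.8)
The plan is to build, for any CID meeting the hypothesis, a compatible SCIM $\scim$ in which $\max_{\pi}\EEs{\pi}[\totutilvar]=0$ while $\max_{\pi,\gx}\EEs{\pi}[\totutilvar_{\gx}]=1$, which exhibits positive value of control. Since a SCIM compatible with $\cid$ may use any finite domains, I am free to choose them; I will take them as in \cref{theorem:ri-graph-criterion-completeness} (so $\{-1,0,1\}$ is available), which suffices below. The construction splits according to whether $\incentivevar$ reaches $\utilvars$ in the minimal reduction $\reducedgraph$ along a directed path that avoids $\decisionvar$.

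\emph{Direct-control case.} Suppose $\reducedgraph$ contains a directed path $\incentivevar = V_0 \to V_1 \to \cdots \to V_k = \utilvar$ with $\utilvar\in\utilvars$ and $\decisionvar\notin\{V_1,\dots,V_k\}$. I would let each $V_{i+1}$ copy the value of $V_i$ (ignoring any other parents), set $\fv{\incentivevar}$ and every structural function not already specified to the constant $0$, and set every utility node other than $\utilvar$ to the constant $0$. Then $\totutilvar$ is exactly the value propagated from $\incentivevar$ to $\utilvar$: with the default functions it is $\equiv 0$ for every policy, whereas the soft intervention $\gx\equiv 1$ forces $\totutilvar\equiv 1$. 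This case is routine.

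\emph{Control-via-information case.} Otherwise every directed $\incentivevar\pathto\utilvars$ path in $\reducedgraph$ runs through $\decisionvar$, so fixing one such path shows that $\incentivevar$ is an ancestor of $\decisionvar$ in $\reducedgraph$ and that $\decisionvar$ has a descendant utility node. Let $\riparvar$ be the last node before $\decisionvar$ on a directed path $\incentivevar\pathto\riparvar\to\decisionvar$ in $\reducedgraph$; since the minimal reduction keeps only requisite information links, $\riparvar\to\decisionvar$ is requisite, i.e.\ $\riparvar$ is d-connected to $\utilvarsd$ given $\{\decisionvar\}\cup\Pad\setminus\{\riparvar\}$ in $\cid$. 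I would then take the model that \cref{theorem:ri-graph-criterion-completeness} builds for this graph --- with its sources $\wusrc^i$, colliders $\wucol^i$, observed descendants $\wuobs^i$, node $Y$, utility $\utilvar$, and its coincidence sub-cases for when $\incentivevar$, $\wusrc^0$ and the merge node coincide --- and make exactly one change: replace $\fv{\incentivevar}$ by the constant $0$ (which reproduces the effect that the intervention $\incentivevar=0$ has there), taking as the candidate soft intervention $\gx$ the structural function that $\incentivevar$ carries in the non-intervened model of \cref{theorem:ri-graph-criterion-completeness} --- a constant, a copy of a parent of $\incentivevar$, or $\exovarv{\incentivevar}$, depending on the sub-case, and in each sub-case a legitimate soft intervention at $\incentivevar$. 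With the default $\fv{\incentivevar}$, the computation of \cref{theorem:ri-graph-criterion-completeness} applies to show that $\decisionvar$ is independent of $\wusrc^m$ under every policy, whence $\EEs{\pi}[\totutilvar]=\EEs{\pi}[\wusrc^m\cdot\decisionvar]=\EEs{\pi}[\wusrc^m]\,\EEs{\pi}[\decisionvar]=0$; under $\gx$ the model coincides with the non-intervened model there, so the policy $\decisionvar=\riparvar\prod_i\wuobs^i=\wusrc^m$ attains $\totutilvar=(\wusrc^m)^2=1$. Hence $\max_{\pi}\EEs{\pi}[\totutilvar]=0<1=\max_{\pi,\gx}\EEs{\pi}[\totutilvar_{\gx}]$.

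The genuine obstacle is the control-via-information case: it needs the full d-connecting-path bookkeeping (colliders, their observed descendants, and the coincidence sub-cases). My plan sidesteps redoing this by importing the construction of \cref{theorem:ri-graph-criterion-completeness} and merely toggling $\fv{\incentivevar}$ between ``off'' (the new default) and ``on'' (the soft intervention), so that controlling $\incentivevar$ plays precisely the role that the non-intervened $\incentivevar$ played in the response-incentive proof.
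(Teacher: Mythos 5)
Your proposal is correct and follows essentially the same route as the paper: a case split on whether the directed path $\incentivevar \pathto \utilvars$ in $\reducedgraph$ avoids $\decisionvar$ (handled by a forward-copying construction toggled between $0$ and $1$) or passes through it (handled by reusing the construction of \cref{theorem:ri-graph-criterion-completeness} with $\fv{\incentivevar}$ switched to the constant $0$ and the soft intervention restoring the ``on'' behaviour). Your only deviation is a slightly more careful choice of $\gx$ in the coincidence sub-cases, where the paper simply uses $\gx(\cdot)=1$; both work.
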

 \begin{proof}
Assume that $\incentivevar \pathto \utilvars$ for
$\incentivevar \notin \{\decisionvar\}$
and fix a particular directed path $\rho$ from $\incentivevar$ to some utility
$\utilvar \in \utilvars$.
We consider two cases depending on whether $\decisionvar$ is in $\rho$
and construct a SCIM for each:

\emph{Case 1: $\rho$ does not contain $\decisionvar$.}
Let the domain of all variables be $\{0, 1\}$.
Set all exogenous variable distributions arbitrarily.
Set $\structfns$ such that $\incentivevar = 0$ with every other variable along
$\rho$ copying the value of $\incentivevar$ forward. All remaining variables
are set to the constant $0$.
With this model, an intervention $\gx$ that sets $\incentivevar$ to
$1$ instead of $0$ increases the total expected utility by $1$, which means
there is an instrumental control incentive for $\incentivevar$.

\emph{Case 2: $\rho$ contains $\decisionvar$.}
This implies that a directed path $\incentivevar \to \decisionvar$ is present in
$\rcid$ so we can construct (a modified version of) the response incentive
construction used in the proof of
\cref{theorem:ri-graph-criterion-completeness}.
We make one change: instead of starting with
$\fv{\incentivevar}(\cdot) = 1$ we start with
$\fv{\incentivevar}(\cdot) = 0$.
As noted in the response incentive completeness proof, this means that
$S_m$ is independent of $\Pad$ so regardless of the policy the optimal
attainable utility is $0$.
If we perform the intervention $\gx(\cdot) = 1$ then the attainable
expected utility is $1$ once again so the intervention $\gx$
strictly increases the optimal expected utility.
\end{proof}

\subsection{Instrumental Control Incentive Criterion}\label{sec:appendix-fci}

The \nameref{sec:fci} section contains a proof of the soundness of the
instrumental control incentive criterion.
We prove its completeness to finish the proof of
\cref{theorem:ig-graph-criterion}.

\begin{lemma}[ICI Criterion Completeness]\label{theorem:fci-graph-criterion-completeness}
If a single-decision CID $\cid$ contains a path of the form
$\decisionvar \pathto \incentivevar \pathto \utilvars$ then
there is an instrumental control incentive on $\incentivevar$ in at least one SCIM $\scim$
compatible with $\cid$.
\end{lemma}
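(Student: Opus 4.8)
The plan is to build, for an arbitrary CID $\cid$ meeting the hypothesis, an explicit compatible SCIM in which the nested counterfactual $\totutilvar_{\incentivevar_\decisionval}$ has a different conditional expectation from $\totutilvar$ under every optimal policy, which witnesses an instrumental control incentive on $\incentivevar$. First I would fix a directed path $\decisionvar \pathto \incentivevar$ and a directed path $\incentivevar \pathto \utilvar$ with $\utilvar \in \utilvars$; since $\cid$ is acyclic each of these is automatically simple and the two can share only $\incentivevar$ (any common node $V$ would give $V \pathto \incentivevar \pathto V$), so their concatenation is a simple directed path $P\colon \decisionvar = V_0 \to V_1 \to \dots \to V_m = \incentivevar \to \dots \to V_n = \utilvar$. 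The key structural remark is that no $V_i$ with $i \ge 1$ can be a parent of $\decisionvar$, since $\decisionvar = V_0 \to \dots \to V_i \to \decisionvar$ would be a directed cycle; hence $P$ is disjoint from the observation set $\Pad$. This is what will let me make every observation deterministic and thereby pin the optimal policy down exactly.

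Next I would specify the SCIM: give every endogenous variable the domain $\{0,1\}$, choose the exogenous distributions arbitrarily, let each $V_i$ for $1 \le i \le n$ have the structural function that copies the value of its path-parent $V_{i-1}$ (ignoring its other parents and its exogenous variable), and let every other endogenous variable --- in particular every node of $\Pad$ and every utility node other than $\utilvar$ --- be the constant $0$. This is compatible with $\cid$: a structural function may depend on any subset of its parents and its exogenous variable, utility domains lie in $\Reals$, and the only functional dependencies used are edges of $P$.

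I would then analyse the policy-completed model. Since every node of $\Pad$ is constant there is a single decision context, occurring with probability one, and along $P$ the copy chain carries the decision value all the way to $\utilvar$, so $\totutilvar = \sum_{\utilvar' \in \utilvars}\utilvar' = \utilvar = \decisionvar$ (every off-path utility node contributes $0$). Hence the optimal behaviour is $\decisionvar = 1$ almost surely, the attained utility is $1$, and every optimal policy $\pi^*$ has $\EEs{\pi^*}[\totutilvar \mid \pad] = 1$ for this context $\pad$. Now take comparison value $\decisionval = 0$: the chain from $\decisionvar$ to $\incentivevar$ forces $\incentivevar_0 \equiv 0$, so $\totutilvar_{\incentivevar_0} = \totutilvar_0$; and the intervention $\doo(\incentivevar = 0)$ propagates $0$ along the chain from $\incentivevar$ to $\utilvar$ --- $\decisionvar$ keeps its policy value, but that is irrelevant since all off-path edges were killed --- so $\totutilvar_0 \equiv 0$. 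Therefore $\EEs{\pi^*}[\totutilvar_{\incentivevar_0} \mid \pad] = 0 \ne 1$ for every optimal $\pi^*$, so this SCIM has an instrumental control incentive on $\incentivevar$, and $\cid$ admits one.

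The only points I expect to need care are (i) the cycle argument forcing $P \cap \Pad = \emptyset$, which is what makes the optimal policy unique and collapses all the expectations to deterministic values, and (ii) evaluating the nested counterfactual correctly --- remembering that in $\totutilvar_{\incentivevar_\decisionval}$ only $\incentivevar$ is reset, to the value $\incentivevar_\decisionval$, while $\decisionvar$ retains its policy-determined value. Killing the off-path nodes removes any competing direct $\decisionvar \pathto \utilvar$ effect, so no ``indirect versus total effect'' subtlety arises, and the various degenerate shapes of $P$ (e.g.\ $\incentivevar$ adjacent to $\utilvar$, or to $\decisionvar$, or $m = n-1$) are handled uniformly by the copy construction.
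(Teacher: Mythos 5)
Your proposal is correct and takes essentially the same route as the paper's proof: the identical copy-chain SCIM along a directed path $\decisionvar \pathto \incentivevar \pathto \utilvar$ with all off-path variables set to the constant $0$, yielding optimal utility $1$ versus a nested-counterfactual value of $0$ for $\decisionval = 0$. The additional care you take (simplicity of the concatenated path and its disjointness from $\Pad$) is left implicit in the paper but does not constitute a different argument.
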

 \begin{proof}\label{le:fci-completeness}
Assume that $\cid$ contains a directed path
$\decisionvar = Z^0 \to Z^1 \to \cdots \to Z^n = \utilvar$ where
$\utilvar \in \utilvars$ and
$Z^i = \incentivevar$ for some $i \in \{0, \ldots, n\}$.
We construct a compatible SCIM for which there is an instrumental control incentive
on $\incentivevar$.
Let all variables along the path $Z^0 \to \ldots \to Z^n$ be
equal to their predecessor, except $Z^0 = \decisionvar$, which has no structure
function. All other variables are set to 0.
In this model, $\utilvar \ceq \decisionvar \in \{0, 1\}$ and all other utility
variables are always $0$ so the only optimal policy is
$\pi^*(\pad) = 1$, which gives
${\EEs{\pi^*}[\totutilvar \mid \Pad=\bm{0}] = 1}$.
Meanwhile, $\utilvar_{\incentivevar_{\decisionval}} = \decisionval$
so for $\decisionval = 0$ we have
${\EEs{\pi^*}[\totutilvar_{\incentivevar_{\decisionval}} \mid
\Pad = \bm{0}] = 0}$.
\end{proof}
 
\subsection{Counterfactual Fairness}\label{sec:appendix-fairness}
\theoremcffair*
\newcommand*{\supps}[1]{\ensuremath{\supp_{#1}}}
\begin{proof}
  We begin by showing that if there exists an optimal policy $\pi$ that is
  counterfactually fair, then there is no response incentive on $A$.
To this end, let
  \begin{align*}
    \supps{\pi}(D\mid \pad) &= \{ d\mid \Prs{\pi}(D = d\mid\pad)>0 \}\\
    \forall a,\  \supps{\pi}(D_a\mid \pad) &= \{ d\mid \Prs{\pi}(D_a = d\mid\pad)>0 \}
  \end{align*}
  be the sets of decisions taken by $\pi$ with positive probability with and
  without an intervention on $A$.
  As a first step, we will show that for any $\exovals\in\dom(\exovars)$ and
  any intervention $a$ on $A$,
  \begin{equation}
    \label{eq:supp}
    \supps{\pi}\big(D\mid \Pad(\exovals)\big)
    = \supps{\pi}\big(D_a\mid \Pad(\exovals)\big).
  \end{equation}
  By way of contradiction, suppose there exists a decision
  \begin{equation}
    \label{eq:set-diff}
    d\in
    \supps{\pi}\big(D\mid \Pad(\exovals)\big)\setminus
    \supps{\pi}\big(D_a \mid \Pad(\exovals)\big).
\end{equation}
  Since $d\in \supps{\pi}\big(D\mid \Pad(\exovals)\big)$,
  we have
  \begin{equation}
    \label{eq:pad-geq}
    \Prs{\pi}\left(D=d\mid \Pad(\exovals), A(\exovals)\right)> 0.
  \end{equation}
  And since $d\not\in\supps{\pi}\big(D_a\mid \Pad(\exovals)\big)$,
  there exists no $\exovals'$ with positive probability
  such that $\Pad(\exovals') = \Pad(\exovals)$,
  $A(\exovals') = A(\exovals)$,
  and $D_a(\exovals') = d$.
  This gives
  \begin{equation}
    \label{eq:pada-eq}
    \Prs{\pi}\left(D_{a}=d\mid \Pad(\exovals), A(\exovals)\right) = 0.
  \end{equation}
  \Cref{eq:pad-geq,eq:pada-eq} violate the counterfactual fairness property,
  \cref{eq:cf},
  which shows that \cref{eq:set-diff} is impossible.
  An analogous argument shows that
  $d\in \supps{\pi}\big(D_a\mid \Pad(\exovals)\big)\setminus
  \supps{\pi}\big(D\mid \Pad(\exovals)\big)$
  also violates the counterfactual fairness property \cref{eq:cf}.
  We have thereby established \cref{eq:supp}.

  Now select an arbitrary ordering of the elements of $\dom(D)$ and define a new
  policy $\pi^*$ such that $\pi^*(\pad)$ is the minimal element of
  $\supps{\pi}(D\mid \pad)$.
  Then $\pi^*$ is optimal because $\pi$ is optimal.
  Further, $\pi^*$ will make the same decision in
  decision contexts $\Pad(\exovals)$ and $\Pad_a(\exovals)$
  because of \cref{eq:supp}.
  In other words, $D_a(\exovals) = D(\exovals)$ in $\scim_{\pi^*}$ for the
  optimal policy $\pi^*$, which means that there is no response incentive on
  $A$.

  Now we prove the reverse direction --- that if there is no response incentive then some optimal $\pi^*$ is counterfactually fair.
  Choose any optimal policy $\pi^*$ where $D_a(\exovals)=D(\exovals)$ for all $\exovals$.
  Since an intervention $a$ cannot change $D$ in any setting, $\Pr(D_{a}=d\mid
  \cdot)=\Pr(D=d\mid \cdot)$ for any condition and any decision $d$, hence $\pi^*$ is counterfactually fair.
\end{proof}

\end{document}